\newtheorem{thm}{Theorem}
\newtheorem{lem}{Lemma}
\title{Learning High Order Feature Interactions with Fine Control Kernels}
\author{Hristo Paskov \\ hpaskov@alumni.stanford.edu
\and Alex Paskov \\ asp2205@columbia.edu
\and Robert West \\ robert.west@epfl.ch}
\date{}
\begin{document}
\maketitle

\begin{abstract}
We provide a methodology for learning sparse statistical models that use as features all possible multiplicative interactions among an underlying atomic set of features. While the resulting optimization problems are exponentially sized, our methodology leads to algorithms that can often solve these problems exactly or provide approximate solutions based on combining highly correlated features. We also introduce an algorithmic paradigm, the Fine Control Kernel framework, so named because it is based on Fenchel Duality and is reminiscent of kernel methods. Its theory is tailored to large sparse learning problems, and it leads to efficient feature screening rules for interactions. These rules are inspired by the Apriori algorithm for market basket analysis -- which also falls under the purview of Fine Control Kernels, and can be applied to a plurality of learning problems including the Lasso and sparse matrix estimation. Experiments on biomedical datasets demonstrate the efficacy of our methodology in deriving algorithms that efficiently produce interactions models which achieve state-of-the-art accuracy and are interpretable.
\end{abstract}

\section{Introduction}
Machine learning algorithms that can uncover and exploit nonlinear multivariate relationships are of fundamental importance to science, medicine, technology, and business. To highlight this need, consider the behavior of a model restricted to learn arbitrary \emph{univariate} functions of its inputs which are then added together to form an output, i.e. a generalized additive regression model. Such a model cannot ``discover" Newton's second law of motion $F=ma$ because force involves a multiplicative interaction between mass and acceleration. Even worse, the model's bias can mislead it to find \emph{non-monotonic functions} of acceleration or mass that actually capture peculiarities in the distribution of the training data. The astute reader may argue that our model works perfectly well if used to predict $\log F$ (provided all quantities are strictly positive) to convert its group operation from addition to multiplication. This remedy is unfortunately limited; it fails under the demands of complex real-world phenomena that require sums of multivariate products (or other nonlinear transformations) to model correctly.

At the same time, finding interactions -- even simple multiplicative ones -- is a challenging problem; an exhaustive search must check $\Theta(2^d)$ combinations when searching for interactions among $d$ features. As such, scalable methods for nonlinear multivariate modelling that explicitly find interactions do so only approximately. For example, decision trees are typically built greedily top-down or bottom-up \cite{ESL}. This greediness can diminish performance and miss important splits \cite{bertsimas2017optimal}, so it is typically necessary to train ensembles of trees \cite{breiman2001random} to, at least partly, ameliorate this issue. However, ensemble methods further increase training times and complicate model interpretability. 

Kernel methods \cite{kernels}, particularly ones based on ANOVA, RBF, or polynomial kernels, form another important class of techniques for learning interactions. These methods are top performers on a variety of learning tasks, but they require computing a Kernel matrix which can be prohibitively expensive on large training sets. Furthermore, they learn \emph{implicit} representations and are thereby not easy to interpret and not compatible with a plurality of modern sparsity inducing regularizers such as the Lasso.



This work investigates scalable methods for training convex "loss plus sparse regularizer" models
\begin{equation}\label{eq:main_primal}
    \underset{\beta }{\text{min }} \ell\left(\mathscr{P}\left(A\right)\beta\right) + r\left(\beta\right).
\end{equation}
that directly learn with $\mathscr{P}\left(A\right)$, a matrix of all $2^d -1$ possible element-wise products of columns from an ``atomic" feature matrix $A \in [0,1]^{n \times d}$. Here $r$ is a sparsity inducing regularizer such as the $\ell_1$ norm; we heavily rely on sparsity to ensure that the resulting model has a computationally tractable number of non-zero coefficients. We also discuss learning scenarios that lead to intractable problems and provide simple approximations based on combining highly correlated features in $\mathscr{P}(A)$.

We provide two main contributions en route to addressing the exponentially-sized optimization problem specified by equation (\ref{eq:main_primal}). The first is an algorithmic paradigm that provides design patterns for tractable optimization in high dimensional model spaces where explicitly storing a dense coefficient vector may be impossible. We term this paradigm the Fine Control Kernel framework because it is based on duality -- in the same manner as the kernel trick, but it is compatible with a wide range of regularizers and is specifically geared towards sparsity. It allows the feature matrix to be treated as a nonlinear operator whose nonlinear structure is dictated by the regularization, and to compute, but never store, primal solutions on the fly. Section \ref{sec:fine_control} outlines the basic theorems, their intended usage, and the optimization techniques that are central to Fine Control Kernels.

Our second main contribution shows how to tractably multiply with $\mathscr{P}(A)$ in the context of the Fine Control Kernel paradigm. Theorem \ref{thm:any_vector} allows us to determine which columns of $\mathscr{P}(A)$ pertain to nonzero coefficients in the model without having to inspect these columns. This theorem provides conditions under which an interaction involving features $u \subset \{1,\dots,d\}$ certifies that all higher order interactions (involving $u$ and other features) may be ignored. This reasoning provides a methodology for deriving ``monotonic" feature screening rules that can be used with a variety of sparsity inducing regularizers to intelligently prune the columns of $\mathscr{P}(A)$ to a tractably sized set. It is a generalization of the \emph{downward closure} property used by the Apriori algorithm \cite{market_basket} for market basket analysis. Section \ref{sec:interactions} discusses the theorem, provides a simple algorithm for the Lasso, and discusses ways to combine overly correlated columns of $\mathscr{P}(A)$ to allow for tractable multiplication when the sparse regularization level is too low to sufficiently prune features.

We demonstrate the viability of our approach in section \ref{sec:experiments} by showcasing three different learning objectives that use $\mathscr{P}(A)$ as a feature matrix and are solved using the Fine Control Kernel paradigm. The first is a ``single-class'' problem in which we wish to find suspiciously frequent drug co-occurrences among nearly 9 million records in the FDA's Adverse Event Reporting System \cite{faers} to propose potential drug interactions. The other experiments are supervised learning problems pertaining to HIV drug screening and quantum chemistry tasks in MolecueNet \cite{moleculenet}. We solve an Elastic-Net regularized logistic regression model for the former and a nuclear norm and group Lasso regularized matrix learning problem for the latter. The interaction models provide substantial accuracy improvements over our baselines in all cases. Moreover, all of these experiments required, at most, on the order of hours to compute entire regularization paths on a 2017 MacBook Pro latop with $16$ gigabytes of RAM.

\subsection{Related Work}
A plurality of approximate algorithms have been developed over the years for learning with interactions \cite{ESL}. Beyond the aforementioned methods, multiadaptive regression splines \cite{friedman1991multivariate} feature prominently in statistics. There has recently been renewed interest in interactions, with a variety of publications aiming to develop convex regularizers -- and corresponding scalable algorithms, that are statistically well suited to this setting. Notable examples include \cite{tibshirani2019pliable, bien2013lasso, yu2019reluctant, hazimeh2019learning}. The XYZ algorithm in \cite{thanei2018xyz} takes a stochastic approach to identifying interactions. All of this work focuses specifically on \emph{pairwise} interactions, which are less computationally challenging than the set of all possible interactions.

Another related research thread is that of convex neural networks \cite{bengio2006convex} which aim to solve a convex objective over an infinite dimensional feature representation. Recent work based on the conditional gradient method \cite{boyd2017saturating, rakesh} shows that good predictive performance can be achieved, but the quality of the solution heavily determines predictive accuracy. These methods can require large amounts of computation to provide high quality solutions and are based on heuristics. 

The work on convex neural networks presents a fascinating connection between the convex objectives typically seen in statistics and the rich feature representations from deep learning. The work of these two communities begins to converge when convex learning problems are equipped with sufficiently rich feature representations. The Fine Control Kernel paradigm aims to solve these types of problems explicitly whenever the feature representation and sparse regularization have sufficient algorithmic structure. At an abstract level, where current approaches to learning isolate the feature matrix as a linear operator (typically an explicit matrix), our work explores what is possible when the combination of rich feature representation and sparse regularization are treated together as a \emph{nonlinear} operator. 

\section{Fine Control Kernels}\label{sec:fine_control}
The entry point for our theory is a refinement of equation (\ref{eq:primal}) that splits the regularization penalty into three terms, each elucidating a different aspect of the learning problem's duality or combinatorial structure. The primal form of the learning problems we are interested in may be written as
\begin{equation}\label{eq:primal}
    \underset{\beta \in \mathbb{R}^d}{\text{min }} \ell\left(X\beta\right) +\left[ \sum_{\mathcal{C} \in \Gamma} \lambda_{\mathcal{C}} \; \underset{s \in \mathcal{C}}{\text{max }} s^\top\beta + \phi\left(\beta\right)+ \omega\left(\beta\right)\right].
\end{equation}
We have replaced the interactions matrix with an arbitrary feature matrix $X$. The first regularization term determines the sparsity structure of the learning problem. It sums over $\Gamma$, a collection of convex sets, each of which must contain $\mathbf{0}$. This decomposition is not unique, for example, 
\[|x|+|y| = \underset{s \in [-1,1]^2}{\text{max}} s_1x + s_2y = \underset{s \in [-1,1]}{\text{max}}sx + \underset{s \in [-1,1]}{\text{max}}sy. \]
It will often be most useful to define $\Gamma$ to have the finest granularity possible so each convex set does not contain Cartesian products.
Functions of the form $f(x)=\underset{s \in \mathcal{C}}{\text{max }} s^\top x$, where $\mathcal{C} \subset \mathbb{R}^d$  is convex, are known as \emph{support functions} in convex analysis \cite{convex}. They capture a variety of convex functions that are not differentiable at $\mathbf{0}$ and can therefore set to zero linear transformations of coefficients in a nontrivial way. A simple example is given by the Lasso \cite{tibshirani1996regression}, whose associated $\Gamma$-collection is the collection of all axis-aligned line segments
\[\Gamma = \left\{\left\{s \mathbf{e}_1 \mid -1 \leq s \leq 1\right\}, \dots,\left\{s \mathbf{e}_d \mid -1 \leq s \leq 1\right\}\right\}.\] 
Here $\mathbf{e}_k$ is the $k^{\text{th}}$ canonical basis vector comprised of all zeros except for a single $1$ at position $k$. Other important examples include group-wise regularizers, the generalized Lasso, and the nuclear norm \cite{sparse_learning}. 


The second regularization term links the solutions of the primal and dual problems. Function $\phi$ plays a role akin to the strictly convex function used in the Bregman divergence for algorithms such mirror descent \cite{first_order}, but we do not require strict convexity. Instead, $\phi$ must have a Fenchel conjugate that is easy to work with and which does not alter the sparsity structure induced by the support functions involving $\Gamma$. Formally, $\phi$ is \emph{compatible} with $\Gamma$ if 
\[\Pi_{\mathcal{C}}v = \mathbf{0} \Rightarrow \Pi_{\mathcal{C}}u = \mathbf{0}, \; \forall v \in \partial \phi(u), \; \forall \mathcal{C} \in \Gamma,\]
where $\Pi_{\mathcal{C}}$ projects onto the subspace containing $\mathcal{C}$.
The interpretation of this condition will become clear momentarily, and it is worthwhile to note that the constant function $\phi(\beta) = 0$ and simple quadratic $\|\beta\|_2^2$ are valid ``universal" choices that always satisfy this condition. Finally, the $\omega$-term captures any priors on $\beta$ that are not sparsity inducing (such as bound constraints) that are not included in $\phi$.

We are now ready to state our main duality theorem which uses \emph{Fenchel conjugation} \cite{convex}, $f^*(x^*) = \underset{x}{\text{sup }} x^\top x^* - f(x)$, to provide a dual problem for equation (\ref{eq:primal}). It is a restatement of Fenchel Duality \cite{convex} specifically tailored to the structure of sparse learning problems, and it automates the computation of dual problems given a table of Fenchel conjugates as in Appendix \ref{sec:app_conj} or any standard text on convex optimization \cite{convex}. Its proof is in Appendix \ref{sec:app_thms}.
\begin{thm}\label{thm:primal_dual}
Suppose that the objective in equation (\ref{eq:primal}) is strictly feasible, bounded from below, and $\ell \circ X, \phi,\omega$ are closed proper convex functions. Define
\[\psi(\zeta)=\underset{\substack{s_{\mathcal{C}} \in \mathcal{C}, \forall \mathcal{C} \in \Gamma \\ \xi \in \mathbb{R}^d}}{\textnormal{min }}\left[\phi^*\left(\zeta - \sum_{\mathcal{C} \in \Gamma} \lambda_{\mathcal{C}} s_{\mathcal{C}} - \xi\right) + \omega^*(\xi) \right].\]
Then
\begin{equation}\label{eq:dual}
\begin{aligned}
    &\underset{\alpha \in \mathbb{R}^n}{\textnormal{max }} -\ell^*(-\alpha) - \psi\left(X^\top \alpha\right)
\end{aligned}
\end{equation}
is a dual problem. Any dual feasible $\alpha$ determines a candidate set of primal coefficients $\beta(\alpha)$ as the subdifferential
\begin{equation}\label{eq:beta_form}
    \beta(\alpha) = \partial \phi^*\left(X^\top\alpha - \sum_{\mathcal{C} \in \Gamma} \lambda_{\mathcal{C}}s_{\mathcal{C}} - \xi\right)
\end{equation}
where the $s_{\mathcal{C}}$ and $\xi$ are optimal for the minimization inside $\psi$.
At least one optimal primal solution exists, and $\beta,\alpha$ form an optimal primal-dual pair iff. $\beta \in \beta\left(\alpha\right)$ and $-\alpha \in \partial \ell(X\beta)$.
\end{thm}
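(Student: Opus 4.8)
The plan is to recognize the bracketed regularizer in (\ref{eq:primal}) as a single closed proper convex function
$R(\beta) = \sum_{\mathcal{C}\in\Gamma}\lambda_{\mathcal{C}}\sigma_{\mathcal{C}}(\beta) + \phi(\beta) + \omega(\beta)$,
where $\sigma_{\mathcal{C}}(\beta) = \max_{s\in\mathcal{C}} s^\top\beta$ is the support function of $\mathcal{C}$, and then to apply Fenchel--Rockafellar duality to the composite problem $\min_\beta \ell(X\beta) + R(\beta)$. The first step is to identify $\psi = R^*$. Using that the conjugate of a sum of closed proper convex functions is the infimal convolution of their conjugates, together with the fact that the conjugate of the support function $\lambda_{\mathcal{C}}\sigma_{\mathcal{C}} = \sigma_{\lambda_{\mathcal{C}}\mathcal{C}}$ is the indicator $\iota_{\lambda_{\mathcal{C}}\mathcal{C}}$, the conjugate $R^*(\zeta)$ becomes an infimum over decompositions $\zeta = \sum_{\mathcal{C}}\lambda_{\mathcal{C}}s_{\mathcal{C}} + p + \xi$ with $s_{\mathcal{C}}\in\mathcal{C}$; eliminating $p$ recovers exactly the formula defining $\psi$. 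I would verify exactness of this infimal convolution (attainment of the inner minimum) using the strict feasibility and boundedness hypotheses.

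With $\psi = R^*$ in hand, the dual (\ref{eq:dual}) is the standard Fenchel dual of $\min_\beta \ell(X\beta) + R(\beta)$: applying Fenchel--Rockafellar with $g = \ell$ gives $\max_\alpha -\ell^*(\alpha) - R^*(-X^\top\alpha)$, and the sign substitution $\alpha\mapsto-\alpha$ yields $\max_\alpha -\ell^*(-\alpha) - \psi(X^\top\alpha)$. The qualification needed for strong duality (zero gap) and for attainment of the primal optimum is supplied by the assumptions that $\ell\circ X,\phi,\omega$ are closed proper convex, that the problem is strictly feasible (a Slater-type relative-interior condition), and that it is bounded below. This delivers both strong duality and the existence of at least one optimal primal solution.

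The characterization of optimal pairs I would obtain from the Fenchel--Young inequality applied termwise to the (now zero) duality gap. Writing the gap as $[\ell(X\beta)+\ell^*(-\alpha)+\alpha^\top X\beta] + [R(\beta)+\psi(X^\top\alpha)-\alpha^\top X\beta]$, both bracketed quantities are nonnegative, and they vanish exactly when, respectively, $-\alpha\in\partial\ell(X\beta)$ and $\beta\in\partial\psi(X^\top\alpha)$. Hence $(\beta,\alpha)$ is an optimal primal--dual pair iff these two subdifferential inclusions hold. It then remains to translate $\beta\in\partial\psi(X^\top\alpha)$ into the stated reconstruction $\beta\in\beta(\alpha) = \partial\phi^*(X^\top\alpha - \sum_{\mathcal{C}}\lambda_{\mathcal{C}}s_{\mathcal{C}} - \xi)$, where $s_{\mathcal{C}},\xi$ are optimal inside $\psi$.

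This last translation is the step I expect to be the main obstacle. Computing $\partial\psi$ through the subdifferential rule for infimal convolutions gives, at the optimal inner decomposition, an intersection of $\partial\phi^*(p)$ with the normal cones $N_{\mathcal{C}}(s_{\mathcal{C}})$ (arising from the constraints $s_{\mathcal{C}}\in\mathcal{C}$) and with $\partial\omega^*(\xi)$. The work is to show this intersection collapses to $\partial\phi^*(p)$, i.e., that every $\beta\in\partial\phi^*(p)$ automatically attains the support function at $s_{\mathcal{C}}$ (so $\sigma_{\mathcal{C}}(\beta)=\beta^\top s_{\mathcal{C}}$) and satisfies $\beta\in\partial\omega^*(\xi)$. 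I would extract these from the first-order optimality conditions of the inner minimization in $s_{\mathcal{C}}$ and $\xi$, and it is precisely here that compatibility of $\phi$ with $\Gamma$ enters: the condition $\Pi_{\mathcal{C}}v=\mathbf{0}\Rightarrow\Pi_{\mathcal{C}}u=\mathbf{0}$ for $v\in\partial\phi(u)$ controls the components of $\partial\phi^*(p)$ inside each subspace $\mathrm{span}(\mathcal{C})$, ensuring the reconstructed coefficients are consistent with the support-function subgradients and that the candidate set $\beta(\alpha)$ is exactly the set of primal points completing an optimal pair with $\alpha$. Verifying the termwise Fenchel--Young equalities $\sigma_{\mathcal{C}}(\beta)=\beta^\top s_{\mathcal{C}}$ and $\omega(\beta)+\omega^*(\xi)=\beta^\top\xi$ then recovers $\beta\in\partial\psi(X^\top\alpha)$ and closes the equivalence.
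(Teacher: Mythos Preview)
Your overall strategy---identify $\psi$ as the Fenchel conjugate $R^*$ of the aggregate regularizer via the infimal-convolution formula, then apply Fenchel--Rockafellar duality and extract optimality conditions from the vanishing duality gap---is exactly the paper's approach. The paper packages the infimal-convolution step as a standalone lemma linking subgradients of sums to minimizers of the conjugate convolution, and it invokes Rockafellar's Theorems~31.1 and~31.3 in place of your direct Fenchel--Young argument, but these are equivalent routes.

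Where your proposal goes wrong is in the final translation from $\beta\in\partial\psi(X^\top\alpha)$ to the stated formula $\beta(\alpha)=\partial\phi^*\bigl(X^\top\alpha-\sum_{\mathcal{C}}\lambda_{\mathcal{C}}s_{\mathcal{C}}-\xi\bigr)$. You invoke compatibility of $\phi$ with $\Gamma$ to force the intersection $\partial\phi^*(p)\cap\partial\omega^*(\xi)\cap\bigcap_{\mathcal{C}}N_{\mathcal{C}}(s_{\mathcal{C}})$ to collapse to $\partial\phi^*(p)$, but compatibility is \emph{not} a hypothesis of Theorem~\ref{thm:primal_dual}; it enters the paper only later, in Theorem~\ref{thm:dual_zeros}. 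The paper's proof of Theorem~\ref{thm:primal_dual} makes no use of it and does not attempt the collapse you describe. Instead, it characterizes $\partial\psi(u)$ through the biconjugate: $\beta\in\partial\psi(u)$ iff $u$ maximizes $u^\top\beta-\psi(u)$; expanding $\psi$ via its defining minimum converts this into a joint maximization over $(u,s_{\mathcal{C}},\xi)$, and the first-order condition in $u$ alone reads $\beta\in\partial\phi^*\bigl(u-\sum_{\mathcal{C}}\lambda_{\mathcal{C}}s_{\mathcal{C}}-\xi\bigr)$ with the $s_{\mathcal{C}},\xi$ optimal in $\psi(u)$. That is the formula for $\beta(\alpha)$, obtained without any appeal to compatibility. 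So the ``main obstacle'' you anticipate is not one the paper confronts; you should drop the compatibility argument entirely and instead follow the joint-maximization route to read off the $u$-optimality condition directly.
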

We have chosen this particular dual form because it emphasizes the structure and operations that are prevalent in sparse learning problems. Function $\psi$ represents the regularization in the dual objective, and much of the algorithmic structure in the learning problem is determined by the computation of $\xi$, the $s_{\mathcal{C}}$ variables, and the subdifferential operator $\partial\phi^*$. Luckily, common regularizers exhibit favorable structure, such as separability, which can either allow us to easily compute all relevant quantities and operations on the fly or whose result requires comparably little memory to store.

For example, consider a bound constrained Elastic-Net
\[\underset{\beta \in \mathbb{R}^d}{\text{min }}\ell(X\beta) + \lambda \|\beta\|_1 +\frac{\tau}{2}\|\beta\|_2^2 \;\;\; \text{subject to }\;\mathbf{l} \preceq \beta \preceq \mathbf{u}.\]
Here $\mathbf{l},\mathbf{u} \in \mathbb{R}^d$ provide component-wise lower and upper bounds, respectively, for $\beta$. Applying our theorem yields
\[\underset{\substack{-1 \leq s_k \leq 1 \\ u_k,l_k \geq 0}}{\text{min }}\frac{1}{2\tau}\left\|\zeta - \sum_{k=1}^d \left(\lambda s_k + u_k - l_k\right) \mathbf{e}_k\right\|_2^2 - \mathbf{l}^\top l + \mathbf{u}^\top u\]
for $\psi$. Since the quadratic function $\|x\|_2^2$ is differentiable and strongly convex, $\beta(\alpha) = \left\{w(\alpha)\right\}$ is singleton with
\[w(\alpha)_k = \text{max}\left(\text{min}\left(S\left(X_k^\top \alpha,\lambda\right)/\tau, \mathbf{u}_k\right), \mathbf{l}_k\right),\]
where $S(x, \lambda) = \text{sign}(x)\text{max}(|x|-\lambda, 0)$ is the soft-thresholding operator. In this particular case it is easy to compute \emph{individual} primal coefficients because all priors in the primal objective are separable. This provides a time-memory tradeoff that allows us to compute, but never store, the primal coefficients. In particular, if $w(\alpha)$ contains many non-zero entries and $\alpha$ is relatively small, it may be prudent to store $\alpha$ instead of $w(\alpha)$ and predict on the fly via
\[w(\alpha)^\top x = \sum_{k}\text{max}\left(\text{min}\left(S\left(X_k^\top \alpha,\lambda\right)/\tau, \mathbf{u}_k\right), \mathbf{l}_k\right)x_k.\]
This example also demonstrates a connection to kernel methods which exploit the kernel trick to train a dual problem and avoid computing primal coefficients altogether, even for predictions. Consider what happens if we solely employ $\ell_2$ regularization in our example; application of Theorem \ref{thm:primal_dual} with $\Gamma = \{\emptyset\}$ and $\omega(x) = 0$ yields primal-dual problems 
\[\underset{\gamma \in \mathbb{R}^d}{\text{min }}\ell(X\gamma) + \frac{\tau}{2}\|\gamma\|_2^2 \Leftrightarrow \underset{\theta \in \mathbb{R}^n}{\text{max }}-\ell^*(-\theta) - \frac{1}{2\tau}\theta^\top XX^\top \theta.\]
The only dependence on $\mathbb{R}^d$ occurs in the computation of the $n \times n$ Gram matrix $K = XX^\top$ which the kernel trick \cite{kernels} prescribes computing as $K_{i,j} = k(\mathbf{x}_i, \mathbf{x}_j)$ for some kernel function $k: \mathcal{X}\times \mathcal{X} \rightarrow \mathbb{R}$. This introduces a level of indirection whereby the points $\mathbf{x}_i \in \mathcal{X}$ need not be in $\mathbb{R}^d$; instead they \emph{map} to $\mathbb{R}^d$ via the kernel function's feature map $\phi:\mathcal{X} \rightarrow \mathbb{R}^d$. Thus, the rows of $X$ actually store $\phi(\mathbf{x}_1),\dots,\phi(\mathbf{x}_n)$. Working with $K$ rather than $X$ removes all dependence in the dual problem on $\mathbb{R}^d$.

Predictions in this case are also amenable to the kernel trick. The set $\gamma(\theta) = \{X^\top \theta\}$ is again singleton, so predictions on a point $\mathbf{z} \in \mathcal{X}$ may be computed as
\vspace{-.5em}
\[\frac{1}{\tau}\theta^\top X \phi(\mathbf{z}) = \frac{1}{\tau}\sum_{i=1}^n\theta_i k(\mathbf{z}, \mathbf{x}_i).\]
The kernel trick works in this case because the primal coefficients are always in the row space of $X$. Contrast this with the prediction equation for the bounded Elastic-Net. A primal coefficient there also starts as a point in the row space of $X$, namely $X^\top \alpha$. However, this point is then translated by $\xi$ and the $s_{\mathcal{C}}$ and the result is passed through the subdifferential operator $\partial\phi^*$. These last two operations can push $\beta(\alpha)$ to lie outside of the row space of $X$. Thus, the price of the kernel trick's ability to remove a dual problem's dependence on its primal search space is its limitation to regularizers that ensure $\beta(\alpha)$ lies in the row space of $X$. More general regularization introduces a dependence on the primal search space since $\beta(\alpha)$ becomes a subset of this row space that is then perturbed to lie outside of it.

\subsection{Sparsity Structure}
Sparsity is a powerful way to mitigate the dual problem's dependence on the primal search space when using sophisticated regularization. We define the \emph{sparsity pattern} of a candidate solution $\beta$ for equation (\ref{eq:primal}) to be the subset $\Omega \subset \Gamma$ where $\Pi_{\mathcal{C}}\beta = \mathbf{0}$ for all $\mathcal{C} \in \Omega$. Determining a global minimizer's sparsity pattern is a major part of the optimization when $\Gamma$ is nontrivial. Oftentimes the learning problem simplifies considerably if this sparsity pattern, or a subset thereof, can be guessed ahead of time. Sparsity is also important en route to a minimizer. This is true even in dual formulations as in equation (\ref{eq:dual}). Computing a subgradient of the dual objective requires computing a subgradient of $\psi$, some $\beta \in \beta(\alpha)$ at the current dual variable iterate $\alpha$, and then computing the product $X^\top \beta$. This subgradient and matrix-vector product are impossible to compute na\"{i}vely when the  primal search space is intractably large; in the case of interactions this space is exponentially sized. 

The theorem below, whose proof we relegate to Appendix \ref{sec:app_thms}, uses compatibility to show how dual coefficients $\alpha$ determine the sparsity pattern of their corresponding primal coefficients $\beta \in\beta(\alpha)$. It provides a simple condition that determines when $\Pi_{\mathcal{C}}\beta = \mathbf{0}$ for the $\mathcal{C}\in\Gamma$.
\begin{thm}\label{thm:dual_zeros}
Fix $\alpha \in \mathbb{R}^n$, $\mathcal{V} \in \Gamma$ with $\mathbf{0}\in \mathcal{V}$, and suppose that $\phi$ is compatible with $\Gamma$. Let $\xi^*$ and $s_{\mathcal{C}}^*,\; \forall \mathcal{C}\in\Gamma -\{ \mathcal{V}\}$, be optimal for the minimization inside $\psi(X^\top\alpha)$ and define
\[\Delta_{\mathcal{V}} = X^\top \alpha - \sum_{\mathcal{C} \in \Gamma - \{{\mathcal{V}}\}} \lambda_{\mathcal{C}} s_{\mathcal{C}}^* - \xi^*.\]
If
\begin{equation}\label{eq:sparse}
    \lambda_{\mathcal{V}}^{-1}\Pi_{\mathcal{V}}\Delta_{\mathcal{V}} \in \textnormal{rel int}\left[{\mathcal{V}}\right],
\end{equation}
then $s_{{\mathcal{V}}}^* = \Pi_{\mathcal{V}}\Delta_{{\mathcal{V}}}/\lambda_{{\mathcal{V}}}$ is optimal for $\psi$, and all $\beta \in \beta(\alpha)$ computed using $\xi^*$ and the $s^*_{\mathcal{C}}$ satisfy $\Pi_{{\mathcal{V}}} \beta = \mathbf{0}$.
\end{thm}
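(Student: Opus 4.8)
The plan is to reduce the inner minimization defining $\psi(X^\top\alpha)$ to a single-block problem in the variable $s_{\mathcal V}$ and then certify optimality of the proposed point using compatibility. First I would regard $\xi^*$ and the $s^*_{\mathcal C}$, $\mathcal C\in\Gamma-\{\mathcal V\}$, as the corresponding components of a full minimizer of that inner problem, writing $\bar s_{\mathcal V}$ for its $\mathcal V$-component. Holding $\xi^*$ and the $s^*_{\mathcal C}$ fixed, the objective depends on $s_{\mathcal V}$ only through $g(s):=\phi^*(\Delta_{\mathcal V}-\lambda_{\mathcal V}s)$ (the term $\omega^*(\xi^*)$ being constant), minimized over $s\in\mathcal V$, and $\bar s_{\mathcal V}$ attains $\min_{s\in\mathcal V}g(s)$. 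Hence to prove that $s^*_{\mathcal V}=\Pi_{\mathcal V}\Delta_{\mathcal V}/\lambda_{\mathcal V}$ is also optimal for $\psi$ it suffices to show that $s^*_{\mathcal V}$ attains this same block minimum: substituting it for $\bar s_{\mathcal V}$ keeps the configuration feasible and leaves the objective at its optimal value, so the result is again a minimizer.

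The key computation is that evaluating the argument of $\phi^*$ at $s^*_{\mathcal V}$ gives $\Delta_{\mathcal V}-\lambda_{\mathcal V}s^*_{\mathcal V}=(I-\Pi_{\mathcal V})\Delta_{\mathcal V}=:z$, the component of $\Delta_{\mathcal V}$ orthogonal to the subspace $U$ containing $\mathcal V$; in particular $\Pi_{\mathcal V}z=\mathbf 0$. Hypothesis (\ref{eq:sparse}) states precisely that $s^*_{\mathcal V}\in\textnormal{rel int}[\mathcal V]$, which guarantees both that $s^*_{\mathcal V}$ is feasible and that the normal cone $N_{\mathcal V}(s^*_{\mathcal V})$ equals $U^\perp$; here I use $\mathbf 0\in\mathcal V$, so that $\textnormal{aff}(\mathcal V)=U$ is a genuine subspace onto which $\Pi_{\mathcal V}$ projects.

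The crux is the compatibility step, which I expect to deliver both conclusions at once. Every $\beta\in\beta(\alpha)=\partial\phi^*(z)$ satisfies $z\in\partial\phi(\beta)$ by conjugate subgradient duality; since $\Pi_{\mathcal V}z=\mathbf 0$, compatibility of $\phi$ with $\Gamma$ forces $\Pi_{\mathcal V}\beta=\mathbf 0$, which is exactly the claimed sparsity pattern. Moreover, by the chain rule $\partial g(s^*_{\mathcal V})=-\lambda_{\mathcal V}\,\partial\phi^*(z)$, so each subgradient $-\lambda_{\mathcal V}\beta$ lies in $U^\perp=N_{\mathcal V}(s^*_{\mathcal V})$. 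Therefore $\mathbf 0\in\partial g(s^*_{\mathcal V})+N_{\mathcal V}(s^*_{\mathcal V})$, the first-order condition for $s^*_{\mathcal V}$ to minimize the convex function $g$ over $\mathcal V$, which is what the reduction in the first paragraph needed.

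I expect the main obstacle to be the bookkeeping that links $\partial\phi^*$ to $\partial\phi$: the compatibility hypothesis is phrased through subgradients of $\phi$, whereas both the primal coefficients $\beta(\alpha)$ and the optimality certificate for $g$ live in $\partial\phi^*$. Crossing this gap via conjugate subgradient duality is the single load-bearing idea, and it must be paired with the relative-interior hypothesis (to pin the normal cone down to $U^\perp$ and to secure feasibility of $s^*_{\mathcal V}$) and with nonemptiness of $\partial\phi^*(z)$, so that the certificate exists; the latter I would obtain from finiteness of $\psi$ at the optimum together with closedness of $\phi$.
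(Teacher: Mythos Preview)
Your proposal is correct and follows essentially the same approach as the paper's proof: both compute $\Pi_{\mathcal V}(\Delta_{\mathcal V}-\lambda_{\mathcal V}s^*_{\mathcal V})=\mathbf 0$, invoke conjugate subgradient duality together with compatibility to obtain $\Pi_{\mathcal V}\beta=\mathbf 0$ for every $\beta\in\partial\phi^*(\Delta_{\mathcal V}-\lambda_{\mathcal V}s^*_{\mathcal V})$, and then read optimality of $s^*_{\mathcal V}$ off the first-order condition (the paper cites Rockafellar~27.4, you spell out $0\in\partial g(s^*_{\mathcal V})+N_{\mathcal V}(s^*_{\mathcal V})$). Your block-minimization framing and explicit use of the relative-interior hypothesis for feasibility and the normal-cone identification are a bit more detailed than the paper, but the argument is the same.
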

The role of compatibility in this theorem is to ensure that any of the sparsity patterns $\Gamma$ can generate commute with the subdifferential operator $\partial \phi^*$; this guarantees that the sparsity pattern of primal coefficients matches the sparsity pattern of the argument to $\partial \phi^*$.

Theorem \ref{thm:dual_zeros} generalizes the basis for feature screening rules such as the SAFE rules \cite{safe_rules} which try to predict the sparsity pattern of primal solutions. For example, suppose that we know that some minimizer $\beta^*$ satisfies $\Pi_\mathcal{C}\beta^* = \mathbf{0}$ for all $\mathcal{C} \in \Omega \subset \Gamma$. This defines a subspace constraint, so we may restrict the search space to the subspace that is orthogonal to each of the $\Pi_\mathcal{C}$ in $\Omega$. If the columns of $U \in \mathbb{R}^{d \times p}$ define an orthonormal basis for this subspace, we may work with the reduced objective
\begin{equation}\label{eq:reduced_primal}
    \ell\left(XUz\right) +\sum_{\mathcal{C} \in \Gamma - \Omega} \lambda_{\mathcal{C}} \; \underset{s \in \mathcal{C}}{\text{max }} s^\top Uz + \phi\left(Uz\right)+ \omega\left(Uz\right).
\end{equation}
If $p \ll d$ this primal problem will be considerably faster to solve than the original. This reduced problem also yields a simpler $\psi$ function in the dual problem of Theorem \ref{thm:primal_dual}.

Theorem \ref{thm:dual_zeros} also helps with subgradient calculations at the current iterate $\alpha$. Assuming $\Omega\subset \Gamma$ is now defined for some $\beta \in \beta(\alpha)$ and $U$ is an orthonormal basis for the subspace orthogonal to all $\Pi_{\mathcal{C}}$ for $\mathcal{C} \in \Omega$ (in which $\beta$ lies), we have
\[X\beta = \left(XU\right)\left(U^\top \beta\right).\]
The quantity $u = U^\top \beta$ can be computed via the subgradient at $\alpha$ of the dual of equation (\ref{eq:reduced_primal}). Surprisingly, though the \emph{solution} of this reduced problem differs from the solution of the full problem, the dual subgradients of the two problems at this particular value of $\alpha$ are identical. Critically, the $\psi$ function pertaining to the reduced problem's dual uses as input $\left(XU\right)^\top \alpha$, obtained from the reduced feature matrix, instead of the potentially much larger $X^\top \alpha$. In the case of simple $\ell_1$ regularization, $U$ simply picks out the nonzero coefficients of $\beta$. For this reason, we will refer to $\left(XU\right)^\top \alpha$ as the \emph{effective support} of $X^\top \alpha$.

\section{Optimizing with Feature Interactions}\label{sec:interactions}
This section discusses how the effective support of $\mathscr{P}(A)^\top\alpha$ can be computed efficiently in the presence of sufficient sparsity. Our algorithm dynamically screens interactions to determine this support. We will use the case of $\ell_1$ regularization as a simple illustrative example throughout this section. We will index the columns of $\mathscr{P}\left(A\right)$ by subsets of $\{1,\dots,d\}$, so
$A_{[\![u]\!]} = \prod_{k \in u}A_k$
for $u \subset \{1,\dots,d\}$ denotes the elementwise product of all columns in $u$.

Our algorithm is based on the following theorem which generalizes the \emph{downward closure} property used by the Apriori algorithm \cite{mmds} for market basket analysis. We defer its proof to Appendix \ref{sec:app_thms}.
\begin{thm}\label{thm:any_vector}
    Given $\tau: 2^{\{1,\dots,d\}} \rightarrow \mathbb{R}$, $f: \mathbb{R}^n \rightarrow \mathbb{R}$, and $s \subset \{1,\dots,d\}$, if
    \begin{equation}\label{eq:zero_cond}
        \underset{\mathbf{0} \preceq x \preceq  A_{[\![s]\!]}}{\textnormal{max }} f(x) < \underset{t \succ s}{\textnormal{min }}\tau(t)
    \end{equation}
    then $f\left(A_{[\![t]\!]}\right) < \tau(t)$ for all $t \supset s$ where $t \subset \{1,\dots,d\}$.
\end{thm}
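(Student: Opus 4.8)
The plan is to reduce the entire statement to a single monotonicity observation about the feature products, after which the conclusion follows by a one-line chain of inequalities. The crucial fact I would establish first is a \emph{shrinkage} (downward closure) property: for every $t$ with $s \subseteq t \subseteq \{1,\dots,d\}$, the interaction vector satisfies $\mathbf{0} \preceq A_{[\![t]\!]} \preceq A_{[\![s]\!]}$ componentwise. This is where the standing assumption $A \in [0,1]^{n\times d}$ does all the work: writing $A_{[\![t]\!]} = A_{[\![s]\!]}\prod_{k \in t\setminus s} A_k$ as an elementwise product, each additional factor $A_k$ lies in $[0,1]^n$, so multiplying the nonnegative vector $A_{[\![s]\!]}$ by factors with entries in $[0,1]$ can only decrease each coordinate while keeping it nonnegative.

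Given this, the key consequence is that $A_{[\![t]\!]}$ is a feasible point for the box-constrained maximization on the left-hand side of (\ref{eq:zero_cond}), precisely because $\mathbf{0} \preceq A_{[\![t]\!]} \preceq A_{[\![s]\!]}$. Hence for every $t \supseteq s$ I immediately obtain $f(A_{[\![t]\!]}) \le \max_{\mathbf{0} \preceq x \preceq A_{[\![s]\!]}} f(x)$, bounding the value of $f$ at a genuine higher-order interaction vector by the value of a relaxed maximum that we can actually evaluate without ever forming $A_{[\![t]\!]}$.

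Next I would invoke the hypothesis to bound this relaxed maximum by $\min_{t' \succ s}\tau(t')$, and then note that for any fixed $t \supset s$ appearing in the conclusion, the threshold $\tau(t)$ is itself one of the terms over which that minimum is taken, so $\tau(t) \ge \min_{t' \succ s}\tau(t')$. Chaining the three estimates yields $f(A_{[\![t]\!]}) \le \max_{\mathbf{0} \preceq x \preceq A_{[\![s]\!]}} f(x) < \min_{t' \succ s}\tau(t') \le \tau(t)$, which is exactly the desired strict inequality; since the argument is uniform in $t$, it holds for all such supersets simultaneously.

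I do not anticipate a genuine obstacle here. The only place requiring care is the shrinkage bound $\mathbf{0} \preceq A_{[\![t]\!]} \preceq A_{[\![s]\!]}$, which silently depends on the entries of $A$ lying in $[0,1]$ (the statement would fail for arbitrary real-valued features), together with reading $\succ$ and $\supset$ consistently as (strict) superset, so that every $t$ named in the conclusion is indeed indexed by the minimum on the right. The conceptual content is simply that enlarging an interaction set can only shrink its nonnegative product entries, which is the Apriori-style monotonicity that the theorem is designed to generalize.
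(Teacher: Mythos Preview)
Your proposal is correct and follows exactly the same approach as the paper's proof: establish the componentwise shrinkage $\mathbf{0} \preceq A_{[\![t]\!]} \preceq A_{[\![s]\!]}$ and then chain the three inequalities $f(A_{[\![t]\!]}) \le \max_{\mathbf{0} \preceq x \preceq A_{[\![s]\!]}} f(x) < \min_{t' \succ s}\tau(t') \le \tau(t)$. Your write-up is in fact more explicit than the paper's, since you spell out why the shrinkage bound relies on $A \in [0,1]^{n\times d}$, a dependence the paper's proof leaves implicit.
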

This theorem is intended to be used in conjunction with the sparsity check defined in equation (\ref{eq:sparse}) to prune out interactions that will obviously be excluded from the model. Typically $\tau$ will be a monotonically increasing set function, e.g. $\tau(t) \geq \tau(s)$ if $t \supset s$, for which the minimization on the right-hand side is easy to compute. The basic idea is that if for some $s\subset \{1,\dots,d\}$ we encounter an interaction column $A_{[\![s]\!]}$ that satisfies equation (\ref{eq:zero_cond}) then we can certify that all higher order interactions, involving the features in $s$ as well as others, can be ignored. 

For example, suppose that $\Gamma$ in equation (\ref{eq:primal}) defines a weighted $\ell_1$ regularizer
\[\sum_{\mathcal{C} \in \Gamma} \lambda_{\mathcal{C}} \; \underset{s \in \mathcal{C}}{\text{max }} s^\top\beta = \lambda\sum_{z \in 2^{\{1,\dots,d\}} - \{\emptyset\}} \rho(|z|)|\beta_z|
\]
where $\rho$ is monotonically increasing so interactions are increasingly penalized based on their order.
Assuming $\phi$ is compatible, Theorem \ref{thm:dual_zeros} shows that for any $\alpha$ and $u \subset \{1,\dots,d\}$ where $|A_{[\![u]\!]}^\top \alpha| < \lambda \rho(|u|)$, the members of $\beta \in \beta(\alpha)$ satisfy $\beta_u = 0$. Theorem \ref{thm:any_vector} extends this guarantee to \emph{all higher order interactions} $v \supset u$ provided 
\begin{equation}\label{eq:basic_check}
    \text{max}\left(A_{[\![u]\!]}^\top \alpha_+,\;A_{[\![u]\!]}^\top \alpha_- \right)< \lambda \rho(|u|+1).
\end{equation}
Here $\alpha = \alpha_+ - \alpha_-$ denotes the positive and negative components of $\alpha$, respectively.
\subsection{Algorithm Template}
Theorem \ref{thm:any_vector} translates into an algorithm to compute the support of $\mathscr{P}(A)^\top \alpha$ by iterating over the set of possible feature interactions in a bottom-up fashion. To make this precise, observe that the power set of $\{1,\dots,d\}$ forms a partially ordered set (poset) according to the subset relation $s \subset t \Leftrightarrow s \prec t$. The lowest nontrivial members of this poset are the singleton sets -- pertaining to the original columns of $A$, followed by quadratic interactions and so on. Screening rules like equation (\ref{eq:basic_check}) allow us to short circuit the search for which features will be included in the model by examining this poset bottom-up. Once the condition in Theorem \ref{thm:any_vector} is reached at a set $s$, all higher order $t \succ s$ can be ignored.

A wealth of algorithms \cite{market_basket} exist for efficiently searching posets in the market basket analysis literature. Algorithm \ref{alg:l1_main} incorporates the check from equation (\ref{eq:basic_check}) into the ECLAT algorithm \cite{eclat}, an alternative to Apriori, to screen interactions using $\ell_1$ regularization. We chose ECLAT because it is simpler to implement and faster than Apriori for market basket analysis. The difference between the two is that ECLAT performs a depth-first search of the interaction poset whereas Apriori is breadth-first. Algorithm \ref{alg:l1_main} will typically be initialized with a list of singleton sets $\{1\},\dots,\{d\}$ as inputs to scan all possible interactions. It ``emits" all interactions that can be formed as unions of its input sets and whose coefficient is not zeroed out by the Lasso. We emphasize that there are a variety of quantities the algorithm may actually output, including $A_{[\![u]\!]}$ to construct a cached matrix of relevant interactions or $A_{[\![u]\!]}^\top x$ to compute a matrix-vector product.
\begin{algorithm}[H]
\SetKwInOut{Input}{Input}
\SetKwInOut{Output}{Emits}
\Input{ List of sets of features to scan using dual variable $\alpha$}
\Output{ All interactions included by the Lasso that can be formed as unions of the input sets}
\SetKwIF{If}{ElseIf}{Else}{if}{}{else if}{else}{end if}%
\SetKwFunction{Screen}{Screen}
\SetKwProg{Fn}{Function}{:}{}
\SetKwInOut{Emit}{Emit}
\Fn{\Screen{$F, \alpha$}}{
    \For{$k\gets0$ \KwTo $|F|-1$}{
        $C\gets[\,]$\;
        \For{$j\gets k+1$ \KwTo $|F|$}{
            $u \gets F[k] \cup F[j]$\\
            \If{$|A_{[\![u]\!]}^\top \alpha| > \tau(u)$}{\KwSty{Emit} $u$\;}\label{line:emit}
            \If{$\textnormal{max}(A_{[\![u]\!]}^\top \alpha_+, A_{[\![u]\!]}^\top \alpha_-) > \underset{v \succ u}{\textnormal{min }}\tau(v)$}{Append $u$ to $C$}
        }
        \If{$C \neq [\,]$}{\Screen($C, \alpha$)}
    }
}
\caption{\label{alg:l1_main} Exact Feature Screening for $\ell_1$-norm}
\end{algorithm}
\subsubsection{Practical Complexity Considerations}\label{sec:practical}
An important aspect of equation (\ref{eq:basic_check}), and Theorem \ref{thm:any_vector} in general, is that it requires checking the positive and negative components of $\alpha$ separately. This is necessary because the model may include a higher order interaction $v \supset u$ without including its lower order ancestor. For example, suppose the positive and negative components of $\alpha$ \emph{individually} have a large inner product with $A_{[\![u]\!]}$ but annihilate each other in $|A_{[\![u]\!]}^\top \alpha|$. If $v$ zeros out the negative components of $\alpha$ so that $|A_{[\![v]\!]}^\top \alpha| \approx A_{[\![u]\!]}^\top \alpha_+$, then $v$ may pass the Lasso threshold. This annihilation can require checking many interactions in particularly recalcitrant learning problems where all lower order interactions are uncorrelated with the response. This is unlikely to be a problem in practical statistical settings where the need to scan many features to find one that passes the Lasso threshold indicates over-fitting; it is more likely that any included features passed this threshold by chance.

Including a $K^{\text{th}}$ order interaction into the model requires checking $2^K-1$ interactions because we must inspect all subsets of the $K$ features. In practice this is tractable up to $K\approx 20$ depending upon the dataset. For example, we accidentally included a $23^{\text{rd}}$ order interaction when working with the FDA drug interactions dataset below because a column was replicated $23$ times. This single interaction did not affect computation times appreciably.

The presence of many highly correlated columns in $A$, however, can cause substantial computational problems. If there are $N$ highly correlated features that all produce $K^{\text{th}}$ order interactions then $\sum_{k=1}^K{N \choose k}$ interactions must be scanned. When $N=100$ and $K=5$ the last term in this sum is approximately $75$ million. To this end we recommend combining highly correlated columns in $A$ by picking an arbitrary representative or averaging. It can also be useful to combine highly correlated interactions dynamically as they are explored by Algorithm \ref{alg:l1_main}, although care must be taken to combine features in a consistent manner. It is easy to include this dynamic clustering into the ECLAT algorithm. For example, in our experiments we modified line 6 to ignore an interaction if its support is too similar to its parents'. This method can be interpreted as approximately clustering features and deferring to a lower order representative. We leave it to future work to investigate different clustering schemes and representatives, such as averaging.
\subsection{Non-Negative Dual Coefficients}\label{sec:nn_coeff}
Equation (\ref{eq:basic_check}) simplifies considerably when the dual variable in problem {\ref{eq:dual}} is non-negative; the check reduces to
\begin{equation}\label{eq:nn_check}
    A_{[\![u]\!]}^\top \alpha < \lambda \rho(|u|+1).
\end{equation}
If $u$ does not satisfy this condition then $A_{[\![u]\!]}$ is \emph{necessarily} included in the model since $\rho(|u|) < \rho(|u|+1)$. This implies a \emph{strong hierarchy property} \cite{bien2013lasso} whereby any interaction $v \subset \{1,\dots,d\}$ is included in the model only if all lower order interactions $u\prec v$ are also included. Unlike the general Lasso case, this nonnegative dual regime places the number of interactions that are included in the model in tight correspondence with the number of interactions that are explored by Algorithm \ref{alg:l1_main}.

A natural question, answered by the theorem below, is which objectives ensure that the dual variable $\alpha$ is non-negative.
\begin{thm}\label{thm:nonneg_dual}
Under the assumptions of Theorem \ref{thm:primal_dual}, all optimal values of the dual variable $\alpha$ are non-negative if the loss function $\ell$ may be written as
\begin{equation}\label{eq:nonnegativity}
    \ell(p) = \underset{\xi \in \mathbb{R}_+^n}{\textnormal{min }}\ell(p - \xi) 
\end{equation}
for all $p$ in the domain of $\ell$. The converse is also true if $\ell$ is finite on at least two distinct points in $\mathbb{R}^n_{++}.$
\end{thm}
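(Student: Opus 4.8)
The plan is to reduce the entire statement to one clean fact about the conjugate loss: condition (\ref{eq:nonnegativity}) is equivalent to $\mathrm{dom}(\ell^*)\subseteq\mathbb{R}^n_-$. First I would observe that (\ref{eq:nonnegativity}) merely asserts $\ell(p)\le\ell(p-\xi)$ for every $\xi\succeq\mathbf 0$, i.e.\ that $\ell$ is coordinatewise non-increasing. From this, the inclusion $\mathrm{dom}(\ell^*)\subseteq\mathbb{R}^n_-$ is immediate: if some coordinate $y_i>0$, then pushing $p_i\to+\infty$ sends $\langle y,p\rangle\to+\infty$ while $\ell(p)$ stays bounded above by monotonicity, so $\ell^*(y)=+\infty$. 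The reverse implication uses $\ell=\ell^{**}$ (valid since $\ell$ is closed proper convex): writing $\ell(p)=\sup_{y\preceq\mathbf0}\langle y,p\rangle-\ell^*(y)$ and noting $\langle y,\xi\rangle\le0$ whenever $y\preceq\mathbf0$ and $\xi\succeq\mathbf0$ yields $\ell(p-\xi)\ge\ell(p)$, which is exactly (\ref{eq:nonnegativity}). I would isolate this equivalence as a short lemma.

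With the equivalence in hand the forward (``if'') direction is essentially free. Theorem \ref{thm:primal_dual}, together with strict feasibility and boundedness, guarantees strong duality and an attained dual optimum, so any optimal $\alpha$ makes the dual objective (\ref{eq:dual}) finite; in particular $-\ell^*(-\alpha)>-\infty$, i.e.\ $-\alpha\in\mathrm{dom}(\ell^*)\subseteq\mathbb{R}^n_-$, whence $\alpha\succeq\mathbf 0$. Equivalently, one may invoke the optimality condition $-\alpha\in\partial\ell(X\beta)$ from Theorem \ref{thm:primal_dual} together with the fact that subgradients of a coordinatewise non-increasing convex function are non-positive.

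For the converse I would argue the contrapositive: assuming (\ref{eq:nonnegativity}) fails, the lemma produces a point $\bar y\in\mathrm{dom}(\ell^*)$ with $\bar y_i>0$ for some coordinate $i$, and I must exhibit one admissible instance of (\ref{eq:primal}) whose dual optimum has a strictly negative $i$-th coordinate. I would take $X=I$, $\Gamma=\{\emptyset\}$, $\omega\equiv 0$, and $\phi(\beta)=\tfrac{\tau}{2}\|\beta\|_2^2-\bar y^\top\beta$; a direct conjugate computation then gives $\psi(\alpha)=\tfrac1{2\tau}\|\alpha+\bar y\|_2^2$, so (\ref{eq:dual}) becomes the strictly concave problem $\max_\alpha -\ell^*(-\alpha)-\tfrac1{2\tau}\|\alpha+\bar y\|_2^2$ with a unique maximizer $\alpha^*(\tau)$. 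Comparing the finite, $\tau$-independent value $-\ell^*(\bar y)$ attained at $\alpha=-\bar y$ against the supremum over the halfspace $\{\alpha_i\ge 0\}$ --- where forcing coordinate $i$ away from $-\bar y_i$ contributes an irreducible penalty $\tfrac{\bar y_i^2}{2\tau}$ that drives that supremum to $-\infty$ as $\tau\to 0^+$ --- shows $\alpha^*_i(\tau)<0$ for all sufficiently small $\tau$, the desired violation.

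The main obstacle is making this construction legitimate rather than the arithmetic: I must verify that the constructed instance satisfies every hypothesis of Theorem \ref{thm:primal_dual} --- closedness and properness of $\ell\circ X,\phi,\omega$, boundedness below (handled by the quadratic dominating $\ell$'s affine minorant), and above all strict feasibility. This last point is exactly where the hypothesis that $\ell$ is finite at two distinct points of $\mathbb{R}^n_{++}$ enters: it forces $\mathrm{dom}(\ell)$ to contain a nondegenerate segment, hence to have nonempty relative interior, supplying the strictly feasible primal point (and ensuring $\ell$, and thus $\ell^*$, is proper so that $\bar y$ genuinely exists). Some care is also needed in the halfspace comparison when $\bar y$ lies on the boundary of $\mathrm{dom}(\ell^*)$, which the separable structure of the quadratic bound handles uniformly.
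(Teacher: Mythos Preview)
Your forward direction coincides with the paper's: both reduce to the observation that condition (\ref{eq:nonnegativity}) is equivalent to $\mathrm{dom}(\ell^*)\subseteq\mathbb{R}^n_-$, and hence any dual-feasible $\alpha$ must satisfy $-\alpha\preceq\mathbf 0$.

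For the converse you take a genuinely different route. The paper proceeds \emph{directly}: assuming every optimal dual variable is non-negative, it argues that adding the constraint $\alpha\preceq\mathbf 0$ to the conjugate does not change the biconjugate, and then unwinds a short chain of Fenchel identities to recover $\ell(p)=\min_{\xi\succeq\mathbf 0}\ell(p-\xi)$. Your argument is the contrapositive: starting from a point $\bar y\in\mathrm{dom}(\ell^*)$ with $\bar y_i>0$, you manufacture a concrete instance of (\ref{eq:primal}) (with $X=I$ and a shifted quadratic $\phi$) whose unique dual optimizer has $\alpha_i^*<0$. Both arguments are valid under the natural ``for all admissible instances'' reading of the converse; yours is more constructive and arguably more transparent about what can go wrong, while the paper's chain is shorter once one accepts the key equality $(\ell^*)^*=(\ell^*+I(\mathbb{R}^n_-))^*$.

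One point to tighten: your explanation of where the two-point hypothesis enters is not quite right. Properness of $\ell$ already guarantees that $\mathrm{dom}(\ell)$ is a nonempty convex set with nonempty relative interior, so strict feasibility of your quadratic instance does not actually require two distinct points in $\mathbb{R}^n_{++}$. In the paper the hypothesis is used for a different purpose---to ensure $\mathrm{relint}\,\mathrm{dom}(\ell)$ meets $\mathbb{R}^n_{++}$, which is the qualification needed to invoke Fenchel duality at step (\ref{eq:4_5_}). Your construction seems to go through without it, provided $\ell$ is closed proper convex so that $\ell=\ell^{**}$ holds; you should state that standing assumption explicitly rather than leaning on the two-point condition. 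Also, when you bound $\sup_{\alpha_i\ge 0}D(\alpha)$, make the affine majorant $-\ell^*(-\alpha)\le\langle\alpha,p_0\rangle+\ell(p_0)$ explicit before invoking separability---otherwise the claim that the constrained supremum tends to $-\infty$ as $\tau\downarrow 0$ is not immediate, since $-\ell^*(-\alpha)$ is not bounded above on its own.
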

Unfortunately, this form of loss is limited; regression and classification losses do not have the requisite form. Losses that satisfy equation (\ref{eq:nonnegativity}) pertain to single class problems which try to ``cover'' data. Notable examples include one-class SVM's \cite{scholkopf2000support} and market basket analysis, whose primal objective may be written as
\begin{equation}\label{eq:market_basket}
        \underset{\mathbf{0} \preceq \beta \preceq \mathbf{1}}{\text{min }} -\mathbf{1}^\top \mathscr{P}(A)\beta + \lambda \mathbf{1}^\top \beta.
\end{equation}

\section{Experiments}\label{sec:experiments}
This section discusses three experiments we conducted to demonstrate the Fine Control Kernel framework's ability to train interaction models. Our goal is to demonstrate the capabilities of our algorithmic paradigm, not just a single algorithm, so we chose three learning problems that require disparate learning objectives and therefore different learning algorithms. Each experiment discusses the primal objective it solves. In all cases we solved the dual problem obtained from applying Theorem \ref{thm:primal_dual}; all derivations and further algorithmic details are relegated to Appendix \ref{sec:app_algos}. All relevant parameters were tuned over a grid of reasonable values and selected using a validation set, and accuracy on supervised problems is reported on a testing set. Working with the dual problem and predicting sparsity via Theorems \ref{thm:dual_zeros} and \ref{thm:any_vector} was critical for tractability. 

\subsection{FDA Drug Interactions}
Our first experiment focuses on the non-negative dual coefficient setting of section \ref{sec:nn_coeff}; we chose this first experiment because it provides tight control over the number of features that are scanned and the number of features that enter into the model. The objective we use is a ``democratic" alternative to market basket analysis that tries to balance the interactions included in the model by ensuring that they appear equitably among the training samples. This is achieved by squaring the loss of equation (\ref{eq:market_basket}) to obtain
    \begin{equation}\label{eq:l2_mb}
        \underset{\mathbf{0} \preceq \beta \preceq{1}}{\text{min }}\frac{1}{2}\left\|\tau \mathbf{1} - \mathscr{P}(A)\beta\right\|_+^2 + \lambda \mathbf{1}^\top \beta + \frac{\eta}{2}\|\beta\|_2^2.
    \end{equation}
Here $\|x\|_+^2 = \sum_{i: x_i > 0} x_i^2$ is invariant to \emph{overestimates}, so it satisfies the conditions of Theorem \ref{thm:nonneg_dual}. Parameter $\tau$, in conjunction with the upper bound on $\beta$, controls the order of interactions by setting the target number of nonzero features that should cover each training point. The $\ell_2$ norm on $\beta$ is there for algorithmic convenience; in practice it has little effect on the solution when sufficiently small.

We apply this objective to the problem of finding adverse interactions among drugs in the FDA's Adverse Event Reporting System (FAERS) \cite{faers}. Each entry in this database pertains to an adverse health event that is believed, by a physician, to have been caused by an interaction among a subset of the drugs listed. This data can be interpreted as a ``single class" problem in which we wish to find motifs -- combinations of drugs -- that explain as many of the events as possible. We use equation (\ref{eq:l2_mb}) to this end by defining its input feature matrix $A$ to be a binary indicator matrix in which each row pertains to an entry in the FAERS database, each column a distinct drug, and each entry an indicator for whether that drug was taken during the adverse event. The resulting matrix has nearly $9$ million rows (events) and $4,000$ columns (drugs) after normalizing drug names by RxNorm \cite{rxnorm}.

Our use of equation (\ref{eq:l2_mb}) is inspired by the use of market basket-type algorithms in multistage approaches for finding drug interactions \cite{dumouchel1999bayesian, fda_tech}. There a large list of candidate interactions is generated automatically, and this list is then pruned to a manageable size by a sequence of hand-tuned filters and ultimately examined manually. The precision of the candidate generation algorithm is therefore of critical importance. Figure \ref{fig:fda} compares the precision of equation (\ref{eq:l2_mb}) with $\tau = 10$ to market basket analysis as the number of proposals varies. We use as ground truth interactions deemed notable on the FAERS website. The proposals from equation (\ref{eq:l2_mb}) are substantially more precise than those of market basket analysis.

\begin{figure}[h!]
  \centering
  \includegraphics[scale=.3]{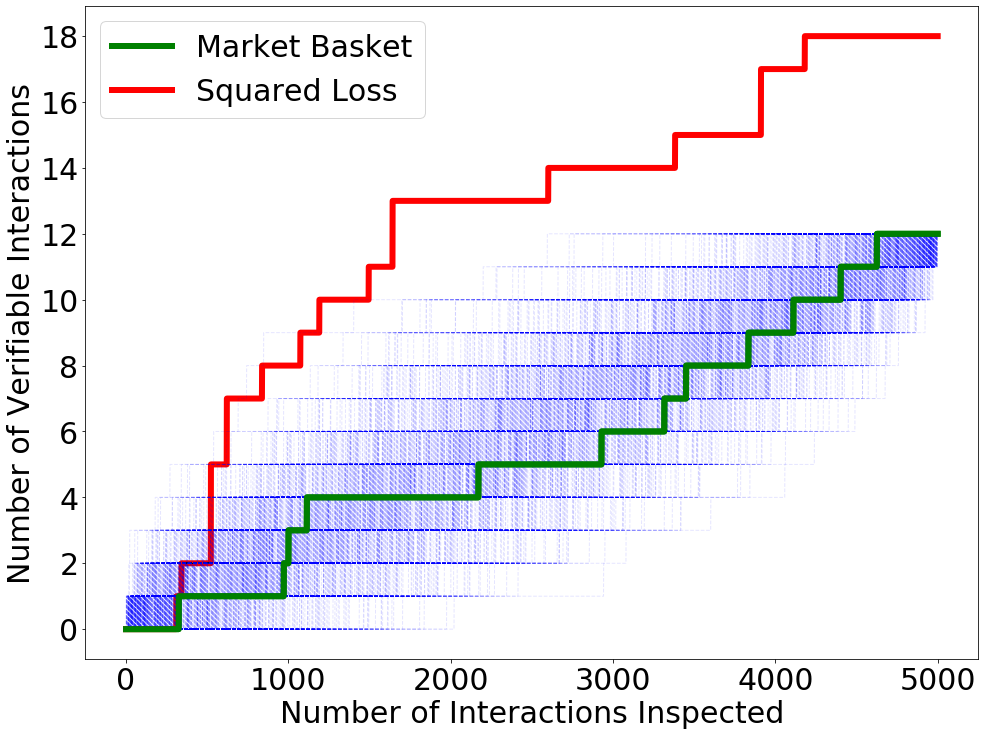}
  \caption{\label{fig:fda}Precision of drug interaction proposals by market basket analysis and ``squared loss" algorithm in equation (\ref{eq:l2_mb}). The blue lines are $1,000$ random permutations of market basket's output.}
\end{figure}

The regularization parameter $\lambda$ in equation (\ref{eq:l2_mb}) controls the number of proposed interactions, so Figure \ref{fig:fda} was generated by successively decreasing $\lambda$ and using the solution for $\lambda_{t-1}$, denoted by $\alpha(\lambda_{t-1})$, to warm-start the solver for $\lambda_t < \lambda_{t-1}$. Figure \ref{fig:homotopy} shows that this homotopy scheme allowed us to use $\alpha(\lambda_{t-1})$ to predict which interactions would be included in the primal solution at $\lambda_t$ with high fidelity. It underscores the importance of these warm starts since using $\alpha(\lambda_0)$, instead of $\alpha(\lambda_{t-1})$, to predict the sparsity pattern of the primal solution at $\lambda_t$ causes the optimizer to work with orders of magnitude more features. In this case, approximately tracing out the regularization path allows us to avoid the perils of working with a problem whose primal solution space is $2^{4,000}$-dimensional.
\begin{figure}[h!]
  \centering
  \includegraphics[scale=.3]{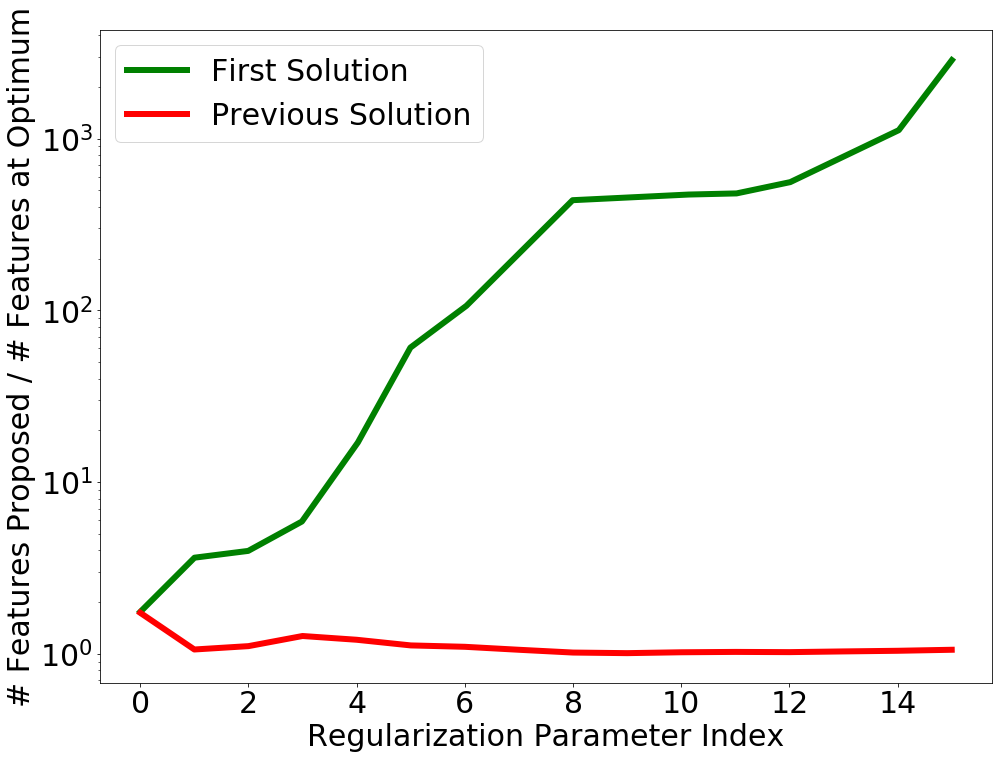}
  \caption{\label{fig:homotopy}Ratio of the number of interaction features \emph{predicted} in the primal solution for $\lambda_t$ to the true number of features in the primal solution for $\lambda_{t}$. Predictions are made either by using the \emph{previous solution} at $\lambda_{t-1}$ or the \emph{first solution} at $\lambda_0$. Predictions are always a superset of the true sparsity pattern.}
\end{figure}

\subsection{HIV Drug Screening}
Our next experiment involves a standard objective, Elastic-Net regularized logistic regression, using $\mathscr{P}(A)$ as the feature matrix. Following the discussion in section \ref{sec:practical}, this objective is markedly different from that of our previous experiment because the number of features included in the model no longer tightly controls the number of features that must be screened. While our algorithm successfully handled using a ``flat" $\ell_1$ regularization parameter $\lambda$ for all interactions, we achieved better predictive performance by penalizing the $\ell_1$-norm for $k^{\text{th}}$-order interactions according to $1.5^{k-1}\lambda$. This increasingly harsher penalty approximately controls for multiple hypothesis testing issues that arise when working with increasingly higher order features. We removed highly correlated features by greedily iterating through the columns of $A$ and ignoring all columns that are at least $99.9\%$ similar to a column that has already been seen. Finally, we found that pruning overly similar interactions by preferring lower order ones sped up training times without sacrificing accuracy. Specifically, we ignore an interaction involving features $u \cup \{i,j\}$ if its parents, $u \cup \{i\}$ and $u \cup \{j\}$, have a support that is more than $50\%$ similar to the child's. Nonetheless, the tuned models incorporated up to $5^{\text{th}}$-order features.

We applied our learning algorithm to the MoleculeNet HIV dataset \cite{moleculenet} which consists of nearly $33,000$ training examples and more than $4,100$ validation and testing samples. This data pertains to a binary classification task whose goal is to predict whether a particular molecule inhibits the replication of HIV. We experimented with two feature representations, the Extended-Connectivity Fingerprints (ECFP) representation \cite{moleculenet} as well as presence/absence indicators for $5$-grams extracted from the SMILE string, a human readable molecular descriptor, for each molecule. Table \ref{tab:hiv} presents the AUC achieved by our interactions-based logistic regression, XGBoost \cite{xgboost}, and a linear logistic regression on either feature representation. Surprisingly, the $5$-gram interactions model achieves top performance; this model is inherently interpretable since it indicates which substrings in the SMILE string are useful for its predictions.
\begin{table}
\caption{\label{tab:hiv}AUC on MoleculeNet HIV task.}
\vspace{1em}
\centering
\begin{tabular}{lll}
\hline
\textbf{Model}        & \textbf{AUC} & \textbf{SE}\\ \hline
Linear, ECFP          & 0.74     &   0.01   \\
Linear, 5-grams       & 0.745    &  0.01   \\
XGBoost, ECFP         & 0.75       &  0.01  \\
Interactions, ECFP    & \textbf{0.79}      &   0.01  \\
Interactions, 5-grams & \textbf{0.79}  &  0.01      
\end{tabular}
\end{table}
\subsection{Quantum Chemistry}
Our last experiment demonstrates our learning framework on a nuclear-norm and group-Lasso regularized matrix learning problem. Specifically, we solve
\begin{equation}\label{eq:matrix_problem}
\begin{aligned}
    \underset{\substack{W \in \mathbb{R}^{(2^d-1) \times T} \\ b \in \mathbb{R}^T}}{\text{min }} \frac{1}{2}\left\|Y - \mathscr{P}(A)W - \mathbf{1}b^\top \right\|_F^2 + R(W), \\ 
    R(W) = \rho \|\mathscr{P}(A)W\|_* + \lambda \|DW\|_{2,1} + \frac{\eta}{2}\|W\|_F^2
    \end{aligned}
\end{equation}
where $\|M\|_*$ denotes the matrix nuclear norm, $D$ is a diagonal matrix, and $\|M\|_{2,1}$ sums over the $\ell_2$-norm of the rows of $M$. As we discuss in Appendix \ref{sec:app_algos}, the nuclear norm of $\mathscr{P}(A)W$ also controls the rank of $W$. We improved prediction accuracy by aggressively penalizing high-order interactions via $D_{ii} = 1.5^{(k-1)^{1.5}}$, where $D_{ii}$ pertains to a $k^{\text{th}}$ order interaction, and we pruned interactions as in the HIV experiments. The resulting models included up to $4^{\text{th}}$-order interactions.

We applied our learning algorithm to the QM9 task in MoleculeNet \cite{moleculenet}, a dataset comprised of 134,000 samples split into $80\%$ training, $10\%$ validation, and $10\%$ testing samples. The goal of this task is to predict $12$ molecular properties typically found through expensive computations involving Schr\"{o}dinger's equation. We decided to use an unconventional approach to this task to highlight the power of interactions. We model the response $y \in \mathbb{R}^{12}$ for a particular molecule $m$ as
\[y \approx B\left(\text{coulomb}(m)\right) + C\left(\text{ngram}(m)\right)\]
where $B$ is a \emph{base} model that uses the Coulomb Matrix representation \cite{moleculenet} of $m$. Model $C$ is a \emph{correction} model that uses presence/absence indicators for $4$-grams extracted from the SMILE string of $m$. Critically, the Coulomb Matrix is computed from the same inputs used by Schr\"{o}dinger's equation, but the complexity of $B$ is restricted to constant or linear functions. The onus of capturing nonlinearities then falls on the correction model, which may only do so by leveraging interesting patterns from a human readable description of $m$. 

Table \ref{tab:quantum} compares the performance of ridge regression, XGBoost, and the model from equation (\ref{eq:matrix_problem}) when used as corrective models; the first row depicts the performance of the base models alone. Since $n$-grams capture different information than the Coulomb matrix the linear corrective model is able to improve performance. The interaction models use $\rho=0.01$ which results in a $9$-dimensional matrix (out of $12$ possible dimensions pertaining to each of the tasks). The $n$-gram features are able to substantially improve performance, and interactions further improve performance. It will be interesting in future work to explore how far we can improve the accuracy of this hybrid model, potentially by employing a better base model.

\begin{table}[]
\caption{\label{tab:quantum} Taskwide average $R^2$ on MoleculeNet QM9 task.}
\vspace{1em}
\centering
\begin{tabular}{lllll}
\hline
\multicolumn{1}{l|}{\multirow{2}{*}{\textbf{Correction Model}}} & \multicolumn{2}{l|}{\textbf{Constant Base}}           & \multicolumn{2}{l}{\textbf{Linear Base}}         \\ \cline{2-5} 
\multicolumn{1}{l|}{}                                & \multicolumn{1}{l|}{$R^2$} & \multicolumn{1}{l|}{SE} & \multicolumn{1}{l|}{$R^2$} & \multicolumn{1}{l}{SE} \\ \hline
None & 0 & 0 & 0.792 & 0.002 \\
Linear &0.722 &0.003 & 0.882 &0.002 \\
XGBoost &0.711 &0.004 &0.895 &0.001\\
Interactions  &0.785 &0.003   & \textbf{0.925} &  0.001                      
\end{tabular}
\end{table}
\section{Conclusion}
This work introduces a novel machine learning paradigm, the Fine Control Kernel framework, aimed at solving convex learning objectives that use massive, rich feature representations. This framework heavily leverages sparsity and is based on a restatement of Fenchel Duality specifically tailored to the structure of sparse learning problems. It makes use of the observation that massive feature matrices become eminently tractable in the presence of sparse regularization.

We use this framework to show how interactions models can be trained, even though the feature matrix $\mathscr{P}(A)$ pertaining to all possible multiplicative interactions among the columns of an atomic feature matrix $A$ is hopelessly large to access na\"{i}vely. This tractability is possible by way of feature screening rules that allow large swaths of interactions to be ignored because of sparsity. These screening rules are based on a generalization of the property that makes market basket analysis computationally feasible. Experiments with large datasets on a laptop demonstrate the computational viability and predictive accuracy of our approach.
\newpage
\appendix
\section*{Appendix}
\renewcommand{\thesubsection}{\Alph{subsection}}
\subsection{Theorem Proofs}\label{sec:app_thms}
This section proves the theorems stated in the main paper; we restate the theorems here for ease of exposition. Our first Theorem provides a duality statement for problems of the form
\begin{equation}\label{eq:primal_}
    \underset{\beta \in \mathbb{R}^d}{\text{min }} \ell\left(X\beta\right) +\left[ \sum_{\mathcal{C} \in \Gamma} \lambda_{\mathcal{C}} \; \underset{s \in \mathcal{C}}{\text{max }} s^\top\beta + \phi\left(\beta\right)+ \omega\left(\beta\right)\right].
\end{equation}
We first the prove the following Lemma, which provides a connection between the subgradients of sums of functions and the infimal convolution of those functions. 

\begin{lem}\label{lem:subdifferential_}
Let $f_1,\dots,f_m$ be closed proper convex functions whereby the sets $\textnormal{rel int } \left(\textnormal{dom } f_i\right)$ have a nontrivial intersection. Suppose that $x,x^*$ satisfy
\begin{equation}\label{eq:optimal_basic_}
x^* \in \partial f_1(x) + \dots + \partial f_m(x)
\end{equation}
and let $x^*_i \in \partial f_i(x)$ for $i=1,\dots,m$ be chosen so that $x^* = \sum_{i=1}^m x_i^*$. Then the $x^*_i$ are minimizers of
\begin{equation}\label{eq:inf_conv_}
\begin{aligned}
&\underset{x_i', \; i = 1,\dots,m}{\textnormal{minimize }}& f_1^*(x_1') + \dots + f_m^*(x_m')\\
&\textnormal{ subject to } & x_1' + \dots + x_m' = x^*.
\end{aligned}
\end{equation}
\end{lem}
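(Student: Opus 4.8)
The plan is to run everything off the Fenchel--Young inequality, which for a closed proper convex function $f$ reads $f(x) + f^*(x^*) \geq x^\top x^*$ for all $x,x^*$, with equality \emph{precisely} when $x^* \in \partial f(x)$. The hypothesis $x_i^* \in \partial f_i(x)$ is exactly the assertion that each pair $(x, x_i^*)$ attains this equality, whereas an arbitrary competitor only satisfies the inequality. This asymmetry between the chosen decomposition and any other feasible one is the entire engine of the argument.

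First I would record the equalities. Since $x_i^* \in \partial f_i(x)$, each $x$ lies in $\textnormal{dom}\, f_i$, so $f_i(x)$ is finite and $f_i(x) + f_i^*(x_i^*) = x^\top x_i^*$. Summing over $i$ and using $\sum_i x_i^* = x^*$ yields
\[
\sum_{i=1}^m f_i^*(x_i^*) = x^\top x^* - \sum_{i=1}^m f_i(x),
\]
so the objective value at the proposed point equals $x^\top x^* - \sum_i f_i(x)$, a finite constant that does not depend on the decomposition. Next I would lower-bound the objective at an arbitrary competitor: for any feasible $x_1',\dots,x_m'$ with $\sum_i x_i' = x^*$, applying Fenchel--Young to each $f_i$ at $(x, x_i')$ and summing gives
\[
\sum_{i=1}^m f_i^*(x_i') \;\geq\; x^\top \sum_{i=1}^m x_i' - \sum_{i=1}^m f_i(x) \;=\; x^\top x^* - \sum_{i=1}^m f_i(x).
\]
The right-hand side is exactly the value attained by the $x_i^*$, so the $x_i^*$ achieve the infimum and are minimizers, as claimed.

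The argument is short because the nontrivial-intersection hypothesis is doing its work elsewhere: it is the Moreau--Rockafellar condition guaranteeing that $\partial\!\left(\sum_i f_i\right)(x) = \sum_i \partial f_i(x)$, hence that a decomposition $x^* = \sum_i x_i^*$ with $x_i^* \in \partial f_i(x)$ exists in the first place. Since the Lemma hands us this decomposition as a hypothesis, the constraint qualification is not reinvoked in the proof itself. Consequently the only genuine care required is bookkeeping: confirming that every $f_i(x)$ and every $f_i^*(x_i^*)$ is finite, so that the sums may be rearranged without producing an indeterminate $\infty - \infty$. Both facts follow immediately from $x_i^* \in \partial f_i(x)$ --- finiteness of $f_i(x)$ from $x \in \textnormal{dom}\, f_i$, and finiteness of $f_i^*(x_i^*)$ from the equality case of Fenchel--Young. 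I expect this finiteness check, rather than any analytic difficulty, to be the only subtlety worth stating explicitly.
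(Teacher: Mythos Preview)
Your proof is correct and takes a somewhat different, more elementary route than the paper. Both arguments begin identically: use the equality case of Fenchel--Young (Rockafellar, Theorem 23.5) at each $x_i^*$ to obtain $\sum_i f_i^*(x_i^*) = x^\top x^* - \sum_i f_i(x)$. The divergence is in how one shows this value is the infimum. The paper identifies the infimum with $\bigl(\sum_i f_i\bigr)^*(x^*)$ via the infimal-convolution formula for conjugates (Rockafellar, Theorem 16.4), which is precisely where the relative-interior hypothesis is consumed, and then observes that $x$ attains this conjugate. You instead bound every competitor directly with the inequality side of Fenchel--Young and sum, never invoking Theorem 16.4 at all. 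Your route is shorter and, as you note, does not itself require the constraint qualification; the paper's route makes explicit the identification of the optimal value with $\bigl(\sum_i f_i\bigr)^*(x^*)$, which is useful context for how the lemma is applied downstream but is not needed for the bare statement.
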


\begin{proof}
The condition in equation (\ref{eq:optimal_basic_}) pertains to the optimality condition for the problem
\begin{equation}
\underset{z}{\text{max }} z^\top x^* - \sum_{i=1}^mf_i(z),
\end{equation}
with $x$ an optimizer of this problem.
We may thus write
\begin{equation}
\begin{aligned}
\sum_{i=1}^m f_i^*\left(x_i^*\right) &=& \sum_{i=1}^m \left(x^\top x_i^* - f_i(x)\right) = x^\top x^* - \sum_{i=1}^mf_i(x) \\
&=& \underset{z}{\text{max }} z^\top x^* - \sum_{i=1}^mf_i(z) = \left(\sum_{i=1}^mf_i\right)^*\left(x^*\right) \\
&=& \underset{x_1' + \dots + x_m' = x^*}{\text{min }}\sum_{i=1}^m f^*_i(x_i').
\end{aligned}
\end{equation}
The first equality follows from Theorem 23.5.d of \cite{convex}, the second from the fact that the $x_i^*$ sum to $x^*$, the third from observing that $x$ is optimal for the maximization problem, the fourth from observing that this is the Fenchel conjugate of the sum of the $f_i$ evaluated at $x^*$, and the final equality from Theorem 16.4 of \cite{convex}. The Lemma follows by noting that $x_i^*$ satisfy the linear constraint of the infimal convolution.
\end{proof}

We may now prove our first theorem, which is restated below.

\begin{thm}\label{thm:primal_dual_}
Suppose that the objective in equation (\ref{eq:primal_}) is strictly feasible, bounded from below, and $\ell \circ X, \phi,\omega$ are closed proper convex functions. Define
\[\psi(\zeta)=\underset{\substack{s_{\mathcal{C}} \in \mathcal{C}, \forall \mathcal{C} \in \Gamma \\ \xi \in \mathbb{R}^d}}{\textnormal{min }}\left[\phi^*\left(\zeta - \sum_{\mathcal{C} \in \Gamma} \lambda_{\mathcal{C}} s_{\mathcal{C}} - \xi\right) + \omega^*(\xi) \right].\]
Then
\begin{equation}\label{eq:dual_}
\begin{aligned}
    &\underset{\alpha \in \mathbb{R}^n}{\textnormal{max }} -\ell^*(-\alpha) - \psi\left(X^\top \alpha\right)
\end{aligned}
\end{equation}
is a dual problem. Any dual feasible $\alpha$ determines a candidate set of primal coefficients $\beta(\alpha)$ as the subdifferential
\begin{equation}\label{eq:beta_form_}
    \beta(\alpha) = \partial \phi^*\left(X^\top\alpha - \sum_{\mathcal{C} \in \Gamma} \lambda_{\mathcal{C}}s_{\mathcal{C}} - \xi\right)
\end{equation}
where the $s_{\mathcal{C}}$ and $\xi$ are optimal for the minimization inside $\psi$.
At least one optimal primal solution exists, and $\beta,\alpha$ form an optimal primal-dual pair iff. $\beta \in \beta\left(\alpha\right)$ and $-\alpha \in \partial \ell(X\beta)$.
\end{thm}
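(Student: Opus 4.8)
The plan is to recognize the statement as an instance of Fenchel--Rockafellar duality, specialized so that all the regularization is bundled into a single convex function whose conjugate turns out to be exactly $\psi$. I would write $g(\beta) = \sum_{\mathcal{C}\in\Gamma}\lambda_{\mathcal{C}}\sigma_{\mathcal{C}}(\beta) + \phi(\beta) + \omega(\beta)$, where $\sigma_{\mathcal{C}}(\beta) = \max_{s\in\mathcal{C}}s^\top\beta$ is the support function of $\mathcal{C}$, so the primal reads $\min_\beta \ell(X\beta) + g(\beta)$. The standard Fenchel dual of such a composite problem is $\max_\alpha -\ell^*(-\alpha) - g^*(X^\top\alpha)$, with the sign convention $\alpha\mapsto-\alpha$ matching the stated form. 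The whole task then splits into two parts: identifying $g^*$ with $\psi$, and extracting the primal-recovery optimality conditions.

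First I would compute $g^*$. Since the conjugate of a support function $\sigma_{\mathcal{C}}$ is the indicator $\delta_{\mathcal{C}}$, a short scaling argument gives $(\lambda_{\mathcal{C}}\sigma_{\mathcal{C}})^* = \delta_{\lambda_{\mathcal{C}}\mathcal{C}}$. Using that the conjugate of a sum of closed proper convex functions equals the infimal convolution of their conjugates (Theorem 16.4 of \cite{convex}, exactly as invoked in Lemma \ref{lem:subdifferential_}), which is valid because strict feasibility supplies the relative-interior constraint qualification, the infimal convolution reduces to
\[
g^*(\zeta) = \min_{\substack{s_{\mathcal{C}}\in\mathcal{C}\\ \xi\in\mathbb{R}^d}}\left[\phi^*\!\left(\zeta - \sum_{\mathcal{C}\in\Gamma}\lambda_{\mathcal{C}}s_{\mathcal{C}} - \xi\right) + \omega^*(\xi)\right] = \psi(\zeta),
\]
where the indicator terms merely enforce the constraints $s_{\mathcal{C}}\in\mathcal{C}$ and contribute nothing to the objective. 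This establishes the dual problem of equation (\ref{eq:dual_}). Strong duality (zero gap), finiteness, and attainment of the dual maximum then follow from the Fenchel--Rockafellar theorem under the strict-feasibility and bounded-below hypotheses; existence of a primal optimum I would obtain either from lower semicontinuity together with coercivity, or by reconstructing a primal optimizer from a dual optimizer through the conditions below.

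The optimality conditions come from the Fenchel characterization: $(\beta,\alpha)$ is an optimal pair iff $-\alpha\in\partial\ell(X\beta)$ and $X^\top\alpha\in\partial g(\beta)$. To recast the second condition into the stated $\beta(\alpha)$ form, I would apply the Moreau--Rockafellar sum rule, $\partial g(\beta) = \sum_{\mathcal{C}}\lambda_{\mathcal{C}}\partial\sigma_{\mathcal{C}}(\beta) + \partial\phi(\beta) + \partial\omega(\beta)$, and then invoke Lemma \ref{lem:subdifferential_}: any decomposition of $X^\top\alpha$ into summands drawn from these subdifferentials is precisely a minimizing decomposition of the infimal convolution defining $\psi(X^\top\alpha)$. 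Consequently the $\phi$-component of the decomposition equals $q^* = X^\top\alpha - \sum_{\mathcal{C}}\lambda_{\mathcal{C}}s_{\mathcal{C}}^* - \xi^*$ with $s_{\mathcal{C}}^*,\xi^*$ optimal inside $\psi$, and $q^*\in\partial\phi(\beta)$ is equivalent to $\beta\in\partial\phi^*(q^*) = \beta(\alpha)$, giving the necessity direction. Sufficiency I would verify by writing the primal-minus-dual objective, via $\psi(X^\top\alpha) = \phi^*(q^*)+\omega^*(\xi^*)$ and the identity $\sum_{\mathcal{C}}\lambda_{\mathcal{C}}s_{\mathcal{C}}^* + q^* + \xi^* = X^\top\alpha$, as a sum of nonnegative Fenchel--Young slacks.

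The step I expect to be the main obstacle is exactly this primal-recovery equivalence. That gap expansion shows the duality gap equals a sum of four slacks, one for $\ell$, one for each $\sigma_{\mathcal{C}}$, one for $\phi$, and one for $\omega$, and vanishes iff all four are tight. The stated conditions $-\alpha\in\partial\ell(X\beta)$ and $\beta\in\partial\phi^*(q^*)$ close two of them, so the delicate point is forcing the support-function slacks $\sigma_{\mathcal{C}}(\beta) - s_{\mathcal{C}}^{*\top}\beta$ and the $\omega$-slack to vanish as well. This is subtle precisely because $\partial\phi^*(q^*)$ can be strictly larger than the full subdifferential $\partial\psi(X^\top\alpha)$ when $\phi$ is not strictly convex, indeed for $\phi\equiv0$ the set $\partial\phi^*$ is an entire subspace, so the remaining slacks cannot close for every $\beta\in\partial\phi^*(q^*)$ without further care. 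Lemma \ref{lem:subdifferential_} is the lever that aligns the $\psi$-minimizers $s_{\mathcal{C}}^*,\xi^*$ with a genuine subgradient decomposition of $X^\top\alpha$; I would lean on it, together with the optimality conditions of the inner $\psi$-minimization, to discharge the remaining tightness requirements, noting that in the strictly convex regime $\partial\phi^*$ is single-valued and the characterization becomes unambiguous, which is the setting intended for on-the-fly primal reconstruction.
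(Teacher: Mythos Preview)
Your approach is essentially the paper's own: Fenchel--Rockafellar duality applied to the splitting $\ell(X\beta)+g(\beta)$, with $g^{*}=\psi$ obtained as an infimal convolution of $\phi^{*}$, $\omega^{*}$, and the indicators $\delta_{\lambda_{\mathcal{C}}\mathcal{C}}$. The paper makes the splitting a bit more explicit by introducing $p=X\beta$ and the constraint subspace $L=\{(p,\beta):p=X\beta\}$ before invoking Fenchel's theorem, but this is cosmetic; your direct route and its route coincide after one line of algebra. Both proofs then reduce the optimality characterization to $-\alpha\in\partial\ell(X\beta)$ together with $\beta\in\partial\psi(X^{\top}\alpha)$, and both use Lemma~\ref{lem:subdifferential_} to tie a subgradient decomposition of $X^{\top}\alpha$ to a minimizing split inside $\psi$.

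The obstacle you single out is real, and it is worth noting that the paper's argument handles it in exactly the partial way you anticipate. After rewriting $\psi^{*}$ as a joint maximization over $(u,s_{\mathcal{C}},\xi)$, the paper reads off only the stationarity condition in $u$, which yields $\beta\in\partial\phi^{*}\bigl(u-\sum_{\mathcal{C}}\lambda_{\mathcal{C}}s_{\mathcal{C}}-\xi\bigr)$, and then asserts that the $s_{\mathcal{C}},\xi$ are optimal for $\psi$. The additional stationarity conditions in $s_{\mathcal{C}}$ and $\xi$---equivalently, the tightness of the support-function and $\omega$ Fenchel--Young slacks in your decomposition---are not separately enforced, so the sufficiency direction of the ``iff'' is established only up to this point in both your outline and the paper's proof. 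Your instinct that the statement is clean when $\phi^{*}$ is single-valued (the regime used throughout the applications) is the right reading; outside that regime one should interpret $\beta(\alpha)$ as a candidate superset of $\partial\psi(X^{\top}\alpha)$, with the remaining slacks closed by the joint optimality of $(s_{\mathcal{C}},\xi)$ that Lemma~\ref{lem:subdifferential_} provides in the necessity direction.
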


\begin{proof}
Consider the convex function $f: \mathbb{R}^n \times \mathbb{R}^d\rightarrow \mathbb{R}$
\begin{equation}
f\left(\begin{bmatrix}p \\ \beta\end{bmatrix}\right) = \ell\left(p\right) + \sum_{\mathcal{C} \in \Gamma} \lambda_{\mathcal{C}} \; \underset{s \in \mathcal{C}}{\text{max }} s^\top \beta + \phi\left(\beta\right)+ \omega\left(\beta\right).
\end{equation}
and the subspace
\[L = \left\{p \in \mathbb{R}^n, \beta \in \mathbb{R}^d \mid p = X\beta\right\}.\]
Clearly minimizing $f$ over the subspace $L$ is equivalent to the problem in equation (\ref{eq:primal_}). The subspace condition may be expressed in the form $Ax = \mathbf{0}$ where
\[A = 
\begin{bmatrix}
 -I & X
\end{bmatrix}, \;\; 
x = \begin{bmatrix}
p \\ \beta
\end{bmatrix}.\]
We may now appeal to Fenchel's Duality Theorem, e.g. Theorem 31.1 in \cite{convex}, to conclude that
\begin{equation}\label{eq:basic_format_}
\begin{aligned}
\underset{x }{\text{min }} &f(x) - \left[-I\left(Ax = \mathbf{0}\right)\right] = \underset{\alpha \in \mathbb{R}^n}{\text{max }}-f^*(A^\top \alpha).
\end{aligned}
\end{equation}
Here we have simplified the maximization problem by noting that the Fenchel conjugate of $I\left(Ax = \mathbf{0}\right)$ restricts the solution to lie in the row space of $A$.


Function $f$ is separable according t   o the $p,\beta$ variables, so its Fenchel conjugate is too. In particular, we may write
\begin{equation}
    f^*\left(p^*, \beta^*\right) = \ell^*(p^*) + \left(\phi + \omega + \sum_{\mathcal{C} \in \Gamma} \lambda_{\mathcal{C}} \; \underset{s \in \mathcal{C}}{\text{max }} s^\top\right)^*(\beta^*).
\end{equation}
The second conjugate can be rewritten as the infimal convolution
\begin{equation*}
\begin{aligned}
&\left(\phi + \omega + \sum_{\mathcal{C} \in \Gamma} \lambda_{\mathcal{C}} \; \underset{s \in \mathcal{C}}{\text{max }} s^\top \right)^*(\beta^*) \\=
& \underset{\substack{\beta_{\mathcal{C}} \in \mathbb{R}^d, \; \forall\mathcal{C} \in \Gamma \\ \xi \in \mathbb{R}^d}}{\text{min }}\phi^*\left(\beta^* - \sum_{\mathcal{C} \in \Gamma}\beta_{\mathcal{C}} - \xi\right) + \omega^*(\xi)+ \sum_{\mathcal{C} \in \Gamma}\left(\lambda_{\mathcal{C}} \; \underset{s \in \mathcal{C}_{\mathcal{G}}}{\text{max }} s^\top\right)^*\left(\beta_{\mathcal{C}}\right)\\ =
& \underset{\substack{s_{\mathcal{C}} \in \mathcal{C}, \; \forall\mathcal{C} \in \Gamma \\ \xi \in \mathbb{R}^d}}{\text{min }}\phi^*\left(\beta^* - \sum_{\mathcal{C} \in \Gamma} \lambda_{\mathcal{C}}s_{\mathcal{C}} - \xi\right) + \omega^*\left(\xi\right) = \psi(\beta^*).
\end{aligned}
\end{equation*}
The first equality follows by noting that $x$ is a maximizer of the Fenchel conjugate
\begin{equation*}
    \underset{x \in \mathbb{R}^d}{\text{max }}x^\top \beta^* - \left(\phi + \omega + \sum_{\mathcal{C} \in \Gamma} \lambda_{\mathcal{C}} \; \underset{s \in \mathcal{C}}{\text{max }} s^\top \right)^*(\beta^*)
\end{equation*}
only if
\[\beta^* \in \partial \phi(x) + \partial \omega(x)  + \sum_{\mathcal{C} \in \Gamma} \partial \left(\lambda_{\mathcal{C}} \underset{s \in \mathcal{C}}{\text{max }}s^\top\right)(x).\]
Applying Lemma \ref{lem:subdifferential_} to this condition and eliminating the optimization variable pertaining to $\phi^*$ provides the requisite form. The second equality follows from the duality between indicator and support functions for convex sets, i.e.
\[\left(\underset{s \in \mathcal{C}}{\text{max }} \lambda_{\mathcal{C}} s^\top\right)^*(x) = \left( \underset{s}{\text{max }} \lambda_{\mathcal{C}}s^\top - I\left(s \in \mathcal{C}\right) \right)^*(x) = I\left(\lambda^{-1}_{\mathcal{C}}x \in \mathcal{C}\right).\]
Finally, plugging these conjugates and matrix $A$ into equation (\ref{eq:basic_format_}) yields the desired form of equation (\ref{eq:dual_}).

Equation (\ref{eq:basic_format_}) establishes primal-dual problems with zero duality gap at their respective optimizers. Theorem 31.3 establishes necessary and sufficient conditions for $\beta,\alpha$ to be an optimal primal-dual pair by noting that $x^* = A^\top \alpha$ in equation (\ref{eq:basic_format_}) which leads to the conditions
\[\begin{bmatrix}-\alpha \\ X^\top \alpha\end{bmatrix} \in \begin{bmatrix} \partial\ell(X\beta)\\ \partial\left(\phi + \omega + \sum_{\mathcal{C} \in \Gamma} \lambda_{\mathcal{C}} \; \underset{s \in \mathcal{C}}{\text{max }} s^\top \right)(\beta) \end{bmatrix}\]
as well as the vacuous condition $X\beta = X\beta$.
By 23.5a and 23.5.a* in \cite{convex} the condition on $X^\top \alpha$ is equivalent to
\[X^\top \alpha \in \partial\left(\phi + \omega + \sum_{\mathcal{C} \in \Gamma} \lambda_{\mathcal{C}} \; \underset{s \in \mathcal{C}}{\text{max }} s^\top \right)(\beta) \Leftrightarrow \beta \in \partial \psi\left(X^\top\alpha\right).\]
It remains to express this latter subdifferential in a more manageable format to finish the theorem. To this end, by Theorem 23.5 of \cite{convex} observe that the subdifferential of $\psi$ is characterized by the set of all pairs $u, \beta$ where $u$ is optimal for
\[\underset{u}{\text{max }} u^\top\beta - \psi(u) = \underset{u}{\text{max }} u^\top\beta + \underset{\substack{\beta_{\mathcal{C}} \\ \xi}}{\text{max }} -\left[\phi^*\left(u - \sum_{\mathcal{C} \in \Gamma} \lambda_{\mathcal{C}}s_{\mathcal{C}} - \xi\right) + \omega^*\left(\xi\right)\right]\]
\[=\underset{\substack{\beta_{\mathcal{C}} \\ u, \xi}}{\text{max }} u^\top \beta - \phi^*\left(u - \sum_{\mathcal{C} \in \Gamma} \lambda_{\mathcal{C}}s_{\mathcal{C}} - \xi\right) - \omega^*\left(\xi\right).\]
The optimality conditions for $u$ in this last problem lead to
\[\beta = \partial \phi^*\left(u - \sum_{\mathcal{C} \in \Gamma} \lambda_{\mathcal{C}}s_{\mathcal{C}} - \xi\right)\]
where the $s_{\mathcal{C}}$ and $\xi$ are optimal in $\psi$ for $u$. Substituting $u = X^\top \alpha$ yields the requisite formula for $\partial \psi(X^\top \alpha)$.
\end{proof}

\begin{thm}\label{thm:dual_zeros_}
Fix $\alpha \in \mathbb{R}^n$, $\mathcal{V} \in \Gamma$ with $\mathbf{0}\in \mathcal{V}$, and suppose that $\phi$ is compatible with $\Gamma$. Let $\xi^*$ and $s_{\mathcal{C}}^*,\; \forall \mathcal{C}\in\Gamma -\{ \mathcal{V}\}$, be optimal for the minimization inside $\psi(X^\top\alpha)$ and define
\[\Delta_{\mathcal{V}} = X^\top \alpha - \sum_{\mathcal{C} \in \Gamma - \{{\mathcal{V}}\}} \lambda_{\mathcal{C}} s_{\mathcal{C}}^* - \xi^*.\]
If
\begin{equation}\label{eq:sparse_}
    \lambda_{\mathcal{V}}^{-1}\Pi_{\mathcal{V}}\Delta_{\mathcal{V}} \in \textnormal{rel int}\left[{\mathcal{V}}\right],
\end{equation}
then $s_{{\mathcal{V}}}^* = \Pi_{\mathcal{V}}\Delta_{{\mathcal{V}}}/\lambda_{{\mathcal{V}}}$ is optimal for $\psi$ and all $\beta \in \beta(\alpha)$ computed using $\xi^*$ and the $s^*_{\mathcal{C}}$ satisfy $\Pi_{{\mathcal{V}}} \beta = \mathbf{0}$.
\end{thm}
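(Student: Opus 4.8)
The plan is to isolate the minimization over $s_{\mathcal{V}}$ inside $\psi(X^\top\alpha)$ while holding the remaining dual variables at their given optimal values $\xi^*$ and $s_{\mathcal{C}}^*$ for $\mathcal{C}\neq\mathcal{V}$. With those fixed, the argument of $\phi^*$ becomes $\Delta_{\mathcal{V}} - \lambda_{\mathcal{V}} s_{\mathcal{V}}$, so the residual problem is the convex program $\min_{s_{\mathcal{V}}\in\mathcal{V}}\phi^*(\Delta_{\mathcal{V}} - \lambda_{\mathcal{V}} s_{\mathcal{V}})$. The candidate $s_{\mathcal{V}}^* = \lambda_{\mathcal{V}}^{-1}\Pi_{\mathcal{V}}\Delta_{\mathcal{V}}$ is feasible precisely because hypothesis (\ref{eq:sparse_}) places it in $\mathrm{rel\,int}[\mathcal{V}]$. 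Substituting it and using $\Pi_{\mathcal{V}} s_{\mathcal{V}}^* = s_{\mathcal{V}}^*$ (since $\mathcal{V}$ lies in the subspace onto which $\Pi_{\mathcal{V}}$ projects, noting $\mathbf{0}\in\mathcal{V}$ so $\mathrm{aff}(\mathcal{V})=\mathrm{span}(\mathcal{V})$) yields the key identity $\Delta_{\mathcal{V}} - \lambda_{\mathcal{V}} s_{\mathcal{V}}^* = (I-\Pi_{\mathcal{V}})\Delta_{\mathcal{V}}$; that is, the argument of $\phi^*$ has vanishing $\mathcal{V}$-component.

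Next I would translate compatibility from $\phi$ to $\phi^*$. Because $\phi$ is closed proper convex, $v\in\partial\phi(u)\Leftrightarrow u\in\partial\phi^*(v)$ by Theorem 23.5 of \cite{convex}. Hence the compatibility hypothesis is equivalent to: whenever $\Pi_{\mathcal{V}} v = \mathbf{0}$, every $u\in\partial\phi^*(v)$ satisfies $\Pi_{\mathcal{V}} u = \mathbf{0}$. Applying this with $v = (I-\Pi_{\mathcal{V}})\Delta_{\mathcal{V}}$, which obeys $\Pi_{\mathcal{V}} v = \mathbf{0}$, gives $\Pi_{\mathcal{V}} u = \mathbf{0}$ for every $u\in\partial\phi^*((I-\Pi_{\mathcal{V}})\Delta_{\mathcal{V}})$. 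This single fact supplies both conclusions of the theorem.

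For optimality of $s_{\mathcal{V}}^*$: the residual objective has subdifferential $-\lambda_{\mathcal{V}}\partial\phi^*(\Delta_{\mathcal{V}} - \lambda_{\mathcal{V}} s_{\mathcal{V}})$, and because $s_{\mathcal{V}}^*$ lies in $\mathrm{rel\,int}[\mathcal{V}]$ the normal cone $N_{\mathcal{V}}(s_{\mathcal{V}}^*)$ equals the orthogonal complement of the subspace containing $\mathcal{V}$, namely $\mathrm{range}(I-\Pi_{\mathcal{V}})$. The first-order optimality condition $\mathbf{0}\in -\lambda_{\mathcal{V}}\partial\phi^*((I-\Pi_{\mathcal{V}})\Delta_{\mathcal{V}}) + N_{\mathcal{V}}(s_{\mathcal{V}}^*)$ then reduces to exhibiting a subgradient $u$ with $\Pi_{\mathcal{V}} u = \mathbf{0}$, which the previous paragraph guarantees; convexity makes this sufficient for global optimality, so $s_{\mathcal{V}}^*$ together with $\xi^*$ and the $s_{\mathcal{C}}^*$ is optimal for $\psi$. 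For the sparsity conclusion: any $\beta\in\beta(\alpha)$ built from $\xi^*$ and the $s_{\mathcal{C}}^*$ (including $s_{\mathcal{V}}^*$) lies in $\partial\phi^*((I-\Pi_{\mathcal{V}})\Delta_{\mathcal{V}})$ by equation (\ref{eq:beta_form_}) and the key identity, whence $\Pi_{\mathcal{V}}\beta = \mathbf{0}$ by the same compatibility fact.

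The main obstacle I anticipate is the clean coupling of the compatibility translation with the normal-cone condition: one must verify that membership in $\mathrm{rel\,int}[\mathcal{V}]$ makes the normal cone the \emph{full} orthogonal complement $\mathrm{range}(I-\Pi_{\mathcal{V}})$ (so the first-order condition collapses to annihilating only the $\mathcal{V}$-component of the subgradient) and that $\partial\phi^*((I-\Pi_{\mathcal{V}})\Delta_{\mathcal{V}})$ is nonempty, so a qualifying subgradient actually exists. Everything else is routine orthogonal bookkeeping built on the decomposition of $\Delta_{\mathcal{V}}$ into its $\Pi_{\mathcal{V}}$ and $(I-\Pi_{\mathcal{V}})$ parts.
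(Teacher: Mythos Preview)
Your proposal is correct and follows essentially the same approach as the paper's proof: both show that with $s_{\mathcal{V}}^* = \lambda_{\mathcal{V}}^{-1}\Pi_{\mathcal{V}}\Delta_{\mathcal{V}}$ the argument of $\phi^*$ has vanishing $\mathcal{V}$-component, invoke the subdifferential inversion $v\in\partial\phi(u)\Leftrightarrow u\in\partial\phi^*(v)$ to translate compatibility to $\phi^*$, and then use that every subgradient is orthogonal to the subspace containing $\mathcal{V}$ to obtain both the sparsity of $\beta$ and the optimality of $s_{\mathcal{V}}^*$. The only cosmetic difference is that the paper cites Theorem~27.4 of \cite{convex} for the optimality step, whereas you phrase the same condition via the normal cone at a relative-interior point; your flagged concern about nonemptiness of $\partial\phi^*$ is a fair technical caveat that the paper's proof leaves implicit.
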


\begin{proof}
We directly verify that $s^*_\mathcal{V}$ as defined is optimal. Since $\mathbf{0} \in \mathcal{V}$, the affine space containing $\mathcal{V}$ is a subspace so $\Pi_{\mathcal{V}}$ is linear projector. Thus, the quantity
\[\Pi_{\mathcal{V}}\left(X^\top \alpha - \sum_{\mathcal{C} \in \Gamma - \{{\mathcal{V}}\}} \lambda_{\mathcal{C}} s_{\mathcal{C}}^* - \xi^* - \lambda_{\mathcal{V}}s^*_{\mathcal{V}}\right) = \Pi_{\mathcal{V}}\Delta_{\mathcal{V}} - \Pi_{\mathcal{V}}\Delta_{\mathcal{V}} = \mathbf{0}.\]
Theorem 23.5 of \cite{convex} shows that $v \in \partial \phi(u)$ if and only if $u \in \partial \phi^*(v)$. Thus, since $\phi$ is compatible with $\Gamma$, we have that all
\[\beta \in \partial \phi^*\left(\Delta_{\mathcal{V}} - \lambda_{\mathcal{V}}s^*_{\mathcal{V}}\right) = \beta(\alpha)\]
satisfy $\Pi_{\mathcal{V}} \beta = \mathbf{0}$. This directly proves the second assertion of the Theorem. It also proves the optimality of $s^*_\mathcal{V}$ by Theorem 27.4 in \cite{convex} since all nonzero subgradients of $\phi^*\left(\Delta_{\mathcal{V}} - \lambda_{\mathcal{V}}s^*_{\mathcal{V}}\right)$ are orthogonal to the subspace containing $\mathcal{V}$. 
\end{proof}


\begin{thm}\label{thm:any_vector_}
    Given $\tau: 2^{\{1,\dots,d\}} \rightarrow \mathbb{R}$, $f: \mathbb{R}^n \rightarrow \mathbb{R}$, and $s \subset \{1,\dots,d\}$, if
    \begin{equation}\label{eq:zero_cond_}
        \underset{\mathbf{0} \preceq x \preceq  A_{[\![s]\!]}}{\textnormal{max }} f(x) < \underset{t \succ s}{\textnormal{min }}\tau(t)
    \end{equation}
    then $f\left(A_{[\![t]\!]}\right) < \tau(t)$ for all $t \supset s$ where $t \subset \{1,\dots,d\}$.
\end{thm}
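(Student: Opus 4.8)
The plan is to reduce the claim to a single chain of inequalities resting on one structural observation: because $A\in[0,1]^{n\times d}$, enlarging an index set can only shrink the corresponding interaction column toward $\mathbf{0}$ while keeping it nonnegative. Concretely, I would first fix any $t$ with $s\subsetneq t\subseteq\{1,\dots,d\}$ (a proper superset, matching the relation $t\succ s$ used in the hypothesis) and show that $A_{[\![t]\!]}$ lies in the box $\{x:\mathbf{0}\preceq x\preceq A_{[\![s]\!]}\}$ appearing on the left-hand side of equation (\ref{eq:zero_cond_}). This box-containment is the entire engine of the proof and is the specialization of the \emph{downward closure} property that makes Apriori work.

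To establish the containment, I would write $t=s\,\cup\,(t\setminus s)$ as a disjoint union, so that componentwise $(A_{[\![t]\!]})_i=(A_{[\![s]\!]})_i\prod_{k\in t\setminus s}(A_k)_i$. Each factor $(A_k)_i$ lies in $[0,1]$, hence the trailing product lies in $[0,1]$, while $(A_{[\![s]\!]})_i\ge 0$; multiplying a nonnegative quantity by a number in $[0,1]$ yields something between $0$ and that quantity, giving $0\le (A_{[\![t]\!]})_i\le (A_{[\![s]\!]})_i$ for every coordinate $i$. This is exactly the elementwise inequality $\mathbf{0}\preceq A_{[\![t]\!]}\preceq A_{[\![s]\!]}$.

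With the containment in hand the rest is immediate. Since $A_{[\![t]\!]}$ is feasible for the box maximization, $f(A_{[\![t]\!]})\le \max_{\mathbf{0}\preceq x\preceq A_{[\![s]\!]}}f(x)$. On the other side, because $t\succ s$ the value $\tau(t)$ is one of the terms minimized on the right-hand side, so $\min_{t'\succ s}\tau(t')\le\tau(t)$. Chaining these with the hypothesis gives $f(A_{[\![t]\!]})\le\max_{\mathbf{0}\preceq x\preceq A_{[\![s]\!]}}f<\min_{t'\succ s}\tau(t')\le\tau(t)$, i.e. $f(A_{[\![t]\!]})<\tau(t)$. As $t$ was an arbitrary proper superset of $s$, this proves the theorem.

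I do not anticipate a genuine obstacle: the whole content is the box-containment observation, and the assumption $A\in[0,1]^{n\times d}$ is precisely what makes it hold. The one point to handle carefully is the monotonicity argument, which would fail without the $[0,1]$ bound, since entries exceeding $1$ could grow the product and negative entries could destroy nonnegativity. No regularity of $f$ or $\tau$ is required beyond well-definedness of the two extrema, and the proof invokes the maximum only through the harmless bound $f(A_{[\![t]\!]})\le\sup_{\mathbf{0}\preceq x\preceq A_{[\![s]\!]}}f$, so it is unaffected by whether that maximum is actually attained.
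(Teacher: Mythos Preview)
Your proof is correct and follows exactly the same route as the paper: establish $\mathbf{0}\preceq A_{[\![t]\!]}\preceq A_{[\![s]\!]}$ from $A\in[0,1]^{n\times d}$, then chain $f(A_{[\![t]\!]})\le\max_{\mathbf{0}\preceq x\preceq A_{[\![s]\!]}}f(x)<\min_{t'\succ s}\tau(t')\le\tau(t)$. The paper's version is just a two-line compression of what you wrote.
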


\begin{proof}
Fix $t \supset s$. Observe that $\mathbf{0} \preceq A_{[\![t]\!]} \preceq A_{[\![s]\!]}$ and therefore
\[f\left(A_{[\![t]\!]}\right) \leq \underset{\mathbf{0} \preceq x \preceq  A_{[\![s]\!]}}{\textnormal{max }} f(x) < \underset{t' \succ s}{\textnormal{min }}\tau(t') \leq \tau(t).\]
\end{proof}


\begin{thm}
Under the assumptions of Theorem \ref{thm:primal_dual_}, all optimal values of the dual variable $\alpha$ are non-negative if the loss function $\ell$ may be written as
\begin{equation}\label{eq:nonnegativity_}
    \ell(p) = \underset{\xi \in \mathbb{R}_+^n}{\textnormal{min }}\ell(p - \xi). 
\end{equation}
for all $p$ in the domain of $\ell$. The converse is also true if $\ell$ is finite on at least two distinct points in $\mathbb{R}^n_{++}.$
\end{thm}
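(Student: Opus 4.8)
The plan is to characterize the representation in equation (\ref{eq:nonnegativity_}) through the Fenchel conjugate $\ell^*$ and then read off the sign of $\alpha$ directly from the dual objective in equation (\ref{eq:dual_}). First I would recognize the right-hand side $\min_{\xi \succeq \mathbf{0}}\ell(p-\xi)$ as the infimal convolution of $\ell$ with the indicator $I_{\mathbb{R}^n_+}$ of the non-negative orthant; equivalently, equation (\ref{eq:nonnegativity_}) asserts that $\ell$ is non-increasing in each coordinate. Conjugating this infimal convolution via Theorem 16.4 of \cite{convex} gives $\ell^* + I_{\mathbb{R}^n_+}^*$, and since the support function of $\mathbb{R}^n_+$ is the indicator $I_{\mathbb{R}^n_-}$ of the non-positive orthant, the representation is equivalent to $\ell^* = \ell^* + I_{\mathbb{R}^n_-}$, i.e. to the condition $\mathrm{dom}\,\ell^* \subseteq \mathbb{R}^n_-$. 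Passing between the two statements uses that $\ell$ is closed proper convex so $\ell = \ell^{**}$: the forward implication conjugates equation (\ref{eq:nonnegativity_}) directly, while the reverse observes that $\mathrm{dom}\,\ell^*\subseteq\mathbb{R}^n_-$ forces the two conjugates to agree, and since the convolution $h$ satisfies $h \preceq \ell$ pointwise yet has $h^{**}=\ell$, it must equal $\ell$.

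With this reformulation the forward direction is immediate. Any optimal $\alpha$ attains the finite optimal value of equation (\ref{eq:dual_}) guaranteed by Theorem \ref{thm:primal_dual_}, so in particular $-\ell^*(-\alpha)$ is finite; hence $-\alpha \in \mathrm{dom}\,\ell^* \subseteq \mathbb{R}^n_-$, which is exactly $\alpha \succeq \mathbf{0}$. Equivalently, the optimality condition $-\alpha \in \partial\ell(X\beta)$ from Theorem \ref{thm:primal_dual_}, combined with the fact that a coordinate-wise non-increasing convex function has every subgradient $\preceq \mathbf{0}$, yields the same conclusion.

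For the converse I would argue the contrapositive: if equation (\ref{eq:nonnegativity_}) fails, then $\mathrm{dom}\,\ell^* \not\subseteq \mathbb{R}^n_-$, so there is a point $y \in \mathrm{dom}\,\ell^*$ with $y_j > 0$ for some coordinate $j$; equivalently, $\ell$ has at some $\bar p$ a subgradient $\bar g \in \partial\ell(\bar p)$ with $\bar g_j > 0$. The goal is to exhibit an instance of equation (\ref{eq:primal_}) whose optimal dual variable is $\alpha = -\bar g$, which then has the negative coordinate $\alpha_j = -\bar g_j < 0$. Concretely I would choose a feature matrix and an $\omega$-term (e.g. an equality or bound prior) that pin the primal optimum to $X\beta^\star = \bar p$; the optimality condition $-\alpha \in \partial\ell(X\beta^\star)$ then selects $\alpha = -\bar g$. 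This is where the hypothesis that $\ell$ is finite at two distinct points of $\mathbb{R}^n_{++}$ enters: it guarantees $\mathrm{dom}\,\ell$ is more than a single point, so $y$ can be taken in the relative interior of $\mathrm{dom}\,\ell^*$ with $\partial\ell^*(y)\neq\emptyset$, and it lets the witnessing instance be built to be \emph{strictly feasible}, as Theorem \ref{thm:primal_dual_} requires.

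The main obstacle is the converse, and specifically translating the intrinsic failure $\mathrm{dom}\,\ell^*\not\subseteq\mathbb{R}^n_-$ of the loss into an actual optimal dual variable carrying a negative coordinate. The delicate point is the necessity of the two-point hypothesis: it rules out the degenerate single-point loss $\ell = I_{\{p_0\}}$, whose conjugate is affine with full domain (so equation (\ref{eq:nonnegativity_}) fails) yet which can force $\alpha \succeq \mathbf{0}$ only vacuously. Verifying that two finite points in $\mathbb{R}^n_{++}$ simultaneously make $\ell^*$ proper with enough structure to support the subgradient $\bar g$ and permit a strictly feasible witnessing problem is the crux. The forward direction and the conjugate reformulation are routine once the support-function computation $I_{\mathbb{R}^n_+}^* = I_{\mathbb{R}^n_-}$ is in hand.
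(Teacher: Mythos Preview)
Your forward direction is essentially the paper's argument rendered abstractly: the paper also computes that $\ell^*(u)=+\infty$ whenever $u\notin\mathbb{R}^n_-$ (by pushing one coordinate of $x$ to $+\infty$ and compensating with $\xi$), i.e.\ $\mathrm{dom}\,\ell^*\subseteq\mathbb{R}^n_-$, and then reads off $\alpha\succeq\mathbf{0}$ from dual feasibility. Your infimal-convolution packaging is neater but not materially different.

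Your converse, however, takes a genuinely different route. The paper does \emph{not} argue the contrapositive by constructing a witnessing instance. It proves the implication directly through the chain
\[
\ell(p)=(\ell^*)^*(p)=\bigl(\ell^*+I_{\mathbb{R}^n_-}\bigr)^*(p)=\cdots=\min_{\xi\succeq\mathbf{0}}\ell(p-\xi),
\]
where the second equality is justified by the hypothesis (adding $\alpha\succeq\mathbf{0}$ to the dual never changes its optimal value), and the remaining steps unwind the conjugate via Fenchel duality (Theorem~31.1 of \cite{convex}). In the paper the two-point hypothesis plays a purely technical role: it is used to prove that $\mathrm{rel\,int}\,\mathrm{dom}\,\ell$ meets $\mathbb{R}^n_{++}$, which is precisely the constraint qualification needed to invoke Theorem~31.1 in that chain. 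Your use of the hypothesis is different---you invoke it to rule out degeneracies like $\ell=I_{\{p_0\}}$ and to make a hypothetical witness strictly feasible.

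Your contrapositive strategy is plausible, but as written it has a real gap: you never build the witness. You assert that one can ``choose a feature matrix and an $\omega$-term that pin the primal optimum to $X\beta^\star=\bar p$'' and thereby force the optimal dual variable to be $-\bar g$, but you do not exhibit such $X,\Gamma,\phi,\omega$, nor verify that the resulting instance is strictly feasible and bounded below (both required by Theorem~\ref{thm:primal_dual_}), nor check that $-\bar g$ is in fact \emph{the} optimal $\alpha$ rather than merely satisfying one of the two optimality conditions. The paper's biconjugate argument sidesteps all of this construction; if you prefer the contrapositive, the missing work is to actually carry it out.
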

\begin{proof}
In what follows we assume that $f$ has a nontrivial domain since otherwise the proof is trivial. 
Suppose that equation (\ref{eq:nonnegativity_}) holds. Then
\[\ell^*(u) = \underset{x}{\text{max }} u^\top x - \underset{\xi \in \mathbb{R}^n_+}{\text{min }} \ell(x-\xi) = \begin{cases} +\infty & \text{if } u \notin \mathbb{R}_-^n \\ \ell^*(u) & \text{otherwise}.\end{cases}\]
The first equality follows from Corollary 12.2.2 of \cite{convex} and equation (\ref{eq:nonnegativity_}). The second equality follows from the assumption that $\ell$ has a non-trivial domain so that there is some $p$ where $f(p) < \infty$. If there is some coordinate $u_i > 0$ we may set $x_j = p_j$ and $\xi_j^* = 0$ for all $j \neq i$ and $x_i = p_i + v$ and $\xi_i^* = v$. As $v \uparrow +\infty$ the inner product $u^\top x$ grows unboundedly whereas $\underset{\xi \in \mathbb{R}^n_+}{\text{min }} \ell(x-\xi) \leq \ell(x - \xi^*)$, so $l^*(u)$ grows unboundedly. Next, suppose that $u \preceq \mathbf{0}$. For any point $p = x - \xi$ where $\xi \neq \mathbf{0}$ we have
\[u^\top p - \ell(p) = u^\top x - \ell(p) - u^\top\xi \geq u^\top x - \ell(p)\]
since $- u^\top\xi \geq 0$. Since we can do no worse if we set $\xi = \mathbf{0}$, it follows that $\underset{\xi \in \mathbb{R}^n_+}{\text{min }} \ell(x-\xi) = \ell(x)$ when $u \preceq \mathbf{0}$.

For the converse, we first prove that $\text{relint dom  }\ell$ nontrivially intersects $\mathbb{R}_{++}^n$. Let $\mathcal{H}$ denote the affine hull of $\text{dom }\ell$ and let $\mathcal{B}$ be a bounded open ball in $\mathcal{H}$. Such a ball exists because our assumption that $\ell$ is finite on at least two points in $\mathbb{R}^n_{++}$ immediately implies that its domain has a non-trivial relative interior. Let $p \in \mathbb{R}_{++}^n$ be a point where $\ell(p) < \infty$; such a point exists by assumption. Let $\varepsilon > 0$ be its smallest entry. Define 
\[M = \underset{x \in \mathcal{B}}{\text{sup }} \|p - x\|_{\infty}\]
which satisfies $\|p-x\|_{\infty} < M < \infty$ for all $x\in\mathcal{B}$ because $\mathcal{B}$ is bounded and open. Consider the ball 
\[\mathcal{B}' = \sigma \mathcal{B} + (1-\sigma)p \text{ where } \sigma =  \frac{\text{min}\left(\varepsilon, M\right)}{M}.\]
Since $M < \infty$ this provides a one-to-one continuous mapping between points in $\mathcal{B}$ and $\mathcal{B}'$, so the latter ball is also open in $\mathcal{H}$. Now consider some $z \in \mathcal{B}$ and its corresponding point $z'=\sigma z + \left(1 - \sigma\right)p$ in $\mathcal{B}'$. We have that
\[\left\|z' - p\right\|_{\infty} = \left\|\sigma z + \left(1 - \sigma\right)p - p\right\|_{\infty} = \sigma \left\|p - z\right\|_{\infty} < \varepsilon.\]
Since any point satisfying $\left\|z' - p\right\|_{\infty} < \varepsilon$ must have strictly positive entries, $\mathcal{B}' \subset \mathbb{R}^n_{++}$. Furthermore by convexity $\mathcal{B}' \subset \text{dom }\ell$, so $\text{relint dom  }\ell \cap \mathbb{R}^n_{++} \neq \emptyset$.

Now suppose that the optimal dual variable $\alpha$ is always non-negative. We then have the following chain of equalities.
\begin{subequations}
\begin{align}
\ell(p)& \\
&=  \left(\ell^*\right)^*(p) \label{eq:4_1_}\\
&=  \left(\ell^* + I(\mathbb{R}^n_-)\right)^*(p) \label{eq:4_2_}\\
&=\underset{x }{\text{max }} p^\top x - \ell^*(x) - I\left(x \preceq \mathbf{0}\right)\label{eq:4_3_}\\
&=\underset{x,y }{\text{max }} p^\top x - \ell^*(x) - I\left(y \preceq \mathbf{0}\right) - I\left(x=y\right)\label{eq:4_4_}\\
&= \underset{\xi}{\text{min }} \left(l^* - p^\top\right)^*(-\xi) + I(\xi \succeq \mathbf{0})\label{eq:4_5_}\\
&=\underset{\xi \in \mathbb{R}_+^n}{\text{min }}\ell(p - \xi)\label{eq:4_6_}
\end{align}
\end{subequations}
Equation (\ref{eq:4_1_}) follows from Theorem 12.2 of \cite{convex} since $f$ is convex, proper, and closed. Equation (\ref{eq:4_2_}) follows from our assumption whereby adding a non-negativity constraint on $\alpha$ does not change the dual. Note that we pass $-\alpha$ into $l^*$, so $\left(l^* + I\{\mathbb{R}_-^n\}\right)(-\alpha)$ ensures non-negativity of $\alpha$. Equation (\ref{eq:4_3_}) follows from Corollary 12.2.2 of \cite{convex}. Equation (\ref{eq:4_4_}) is obtained by introducing an auxiliary variable to construct an equivalent problem. Equation (\ref{eq:4_5_}) follows from applying Theorem 31.1 in \cite{convex} and simplifying. Here we use the definitions
\[g(x,y) = p^\top x - l^*(x) - I(y \preceq \mathbf{0}),\; f(x,y) = I(x=y).\]
This step makes use of the fact that $g,f$ as defined are closed so we may employ condition $(b)$ in the Theorem with our earlier observation that $\text{relint dom  }\ell \cap \mathbb{R}^n_{++} \neq \emptyset$. Finally, equation (\ref{eq:4_6_}) uses the closedness of $l^*$ to appeal to Theorem 12.2. of \cite{convex}.
\end{proof}
\subsection{Fenchel Conjugate Reference}\label{sec:app_conj}
This section provides a table of sample Fenchel conjugates that are useful for the Appendix. Many of these are standard results or otherwise appear in the literature. We provide proofs to demonstrate the simplicity of this operation.
\begin{table}
\begin{tabular}{ll}
\hline
\textbf{Function} $f(x)$& \textbf{Fenchel Conjugate} $f^*(\alpha)$ \\[1mm] \hline \hline 
\textbf{Specific Functions}& \\[1mm]\hline
$\frac{\rho}{2}\left\|y - x\right\|_2^2$ & $\frac{1}{2\rho}\left\|\frac{1}{\rho}\alpha + y\right\|_2^2 - \frac{\rho}{2}\|y\|_2^2$ \\[1mm]
$0$ & $I\left\{\alpha = \mathbf{0}\right\}$ \\[1mm]
$I(x \in \mathcal{C})$ & $\underset{x \in \mathcal{C}}{\text{max }} x^\top \alpha$\\[1mm]
$-yx + \log \left(1 + e^x\right)$ & $\begin{cases}\left(\alpha + y\right) \log (\alpha + y) - \left(1 - \alpha - y\right) \log(1 -\alpha - y) & \text{if } 0\leq \alpha + y \leq 1 \\ \infty & \text{otherwise.}\end{cases}$\\[1mm]
\textbf{General Operations}&\\[1mm]\hline
$\underset{v \in \mathcal{C}}{\text{min }}g\left(x - v\right)$ &$g^*(\alpha) + \underset{v \in \mathcal{C}}{\text{max }}\alpha^\top v$ \\[1mm]
\end{tabular}
\caption{Table of Sample Fenchel Conjugates}
\label{tab:conjugates_}
\end{table}
\newpage
\begin{proof}
\begin{enumerate}
    \item Direct computation yields
    \[\underset{x \in \mathbb{R}^n}{\text{max }} x^\top \alpha - \frac{\rho}{2}\|y - x\|_2^2 = \underset{x \in \mathbb{R}^n}{\text{max }} x^\top \left(\alpha + \rho y\right) - \frac{\rho}{2}\|y \|_2^2-\frac{\rho}{2}\|x\|_2^2\]
    The optimal $x$ is given by $x = \frac{1}{\rho}\alpha + y$ yielding
    \[\frac{1}{2\rho}\left\|\frac{1}{\rho}\alpha + y\right\|_2^2 - \frac{\rho}{2}\|y\|_2^2.\]
    \item Trivial.
    \item Trivial.
    \item The conjugate requires solving
    \[\underset{x}{\text{max }} \alpha x + yx - \log \left(1 + e^x\right),\]
    whose optimality conditions require
    \[\alpha + y = \frac{e^x}{1 + e^x}.\]
    Defining $z = e^x$ and simplifying yields the condition
    \[z = \frac{\alpha + y}{1 -\alpha - y} \Rightarrow x = \log \frac{\alpha + y}{1 -\alpha - y}.\]
    Note that $x$ is only defined if $\alpha + y \in (0, 1)$.
    Plugging back in and simplifying provides the conjugate 
    \[\left(\alpha + y\right) \log (\alpha + y) - \left(1 - \alpha - y\right) \log(1 -\alpha - y),\]
    which is defined on $0 \leq \alpha + y \leq 1$ with the endpoints verified by l'Hopital's rule.
    \item Conjugation requires solving
    \[\underset{x}{\text{max }}x^\top\alpha - \underset{v \in \mathcal{C}}{\text{min }}g\left(x - v\right) = \underset{x,v}{\text{max }}x^\top\alpha - I(v \in \mathcal{C})-g\left(x - v\right).\]
    Now define $u=x-v$ to obtain the equivalent problem
    \[\underset{u,v}{\text{max }}(u+v)^\top\alpha - I(v \in \mathcal{C})-g\left(u\right) = g^*(\alpha) + \underset{v \in \mathcal{C}}{\text{max }}\alpha^\top v.\]

\end{enumerate}
\end{proof}

\subsection{Algorithm Derivations}\label{sec:app_algos}
This section provides a discussion of the algorithms derived to optimize the objectives in the Experiments section of the main paper. The algorithms we construct are quasi-Newton methods that stem from the observation that if $f$ is a differentiable proper convex function, then for some $h$ sufficiently large the minimizer
\begin{equation}\label{eq:quasi_newton_}
    x^* = \underset{x \in \mathcal{S}}{\text{arg min }} \braket{x-\bar{x},\nabla f(\bar{x})} + \frac{1}{2} \braket{x-\bar{x}, \left(H + hI\right)\left(x-\bar{x}\right)}
\end{equation}
yields $f(x^*) < f(\bar{x})$ provided that
\begin{enumerate}
\item $\nabla f(\bar{x}) \neq \mathbf{0}$,
\item $H$ is positive semidefinite, and
\item $\mathcal{S} \cap \mathcal{B}\left(\bar{x}, \varepsilon\right)$ contains a point $x$ with $f(x) < f(\bar{x})$ for all $\varepsilon > 0$.
\end{enumerate}
This methodology subsumes gradient descent algorithms (e.g. set $H = \mathbf{0}$), and a simple back-off strategy that switches to a gradient descent scheme for steps that fail to make sufficient progress ensures that we can do \emph{no worse} than a gradient descent scheme. Moreover, it yields considerable flexibility by allowing the derivation of effective quasi-Newton methods even when $f$ is not second order differentiable or when $\mathcal{S}$ is not convex simply by specifying reasonable choices for $H$. In all cases below we use a conjugate gradient solver \cite{nocedal} to solve linear equations involving $H + hI$.

\subsubsection{Squared Market Basket Analysis}
The objective pertaining to Market Basket Analysis may be written as
\begin{equation}
    \underset{\beta \in [0,1]^d}{\text{minimize }} -\mathbf{1}^\top X \beta + \lambda \mathbf{1}^\top \beta.
\end{equation}
Note that this satisfies the conditions for a nonnegative dual variable since the loss function is $-\mathbf{1}^\top$ so that adding a $\xi$ variable yields the loss
\[\ell(p)=\underset{\xi \in \mathbb{R}^n_+}{\text{min }} -\mathbf{1}^\top \left(p - \xi\right)=\underset{\xi \in \mathbb{R}^n_+}{\text{min }} -\mathbf{1}^\top p + \mathbf{1}^\top \xi\]
which is always optimized by setting $\xi = \mathbf{0}$.
For a given value of $\lambda$ the optimality conditions show that
\[\beta_i = \begin{cases} 0 \text{ if } \mathbf{1}^\top X_i > \lambda \\ 1 \text{ otherwise.}\end{cases}\]
Computing the regularization path of this objective yields an A-Priori type algorithm.

Squaring the loss yields a more democratic objective which encourages sample points to be covered by non-zero coefficients. The objective we will work with is
\begin{equation}
    \underset{\substack{\beta \in [0,1]^d \\ \xi \in \mathbb{R}_+^n}}{\text{minimize }} \frac{1}{2}\left\|\tau\mathbf{1} - X \beta + \xi\right\|_2^2 + \lambda \mathbf{1}^\top \beta + \frac{\gamma}{2}\left\|\beta\right\|_2^2 
\end{equation}
We have added an arbitrarily small amount of $\ell_2$ regularization to ensure solution uniqueness, as could happen with multiple copies of the same columns in $X$.
In order to apply Theorem \ref{thm:primal_dual_} we make the following identifications; when applicable we also provide the Fenchel conjugates derived from table \ref{tab:conjugates_}.
\begin{enumerate}
    \item $\ell(p) = \underset{\xi \in \mathbb{R}_+^n}{\text{min }} \frac{1}{2}\left\|\tau \mathbf{1} + \xi - p \right\|_2^2 \; \Rightarrow \; \ell^*(\alpha) = \frac{1}{2}\left\|\tau \mathbf{1} + \alpha\right\|_2^2 - \frac{\tau^2 n}{2} + I(\alpha \in \mathbb{R}^n_{+})$
    \item $\Gamma = \left\{\{\mathbf{1}\}, \mathbb{R}^d_-\right\}$
    \item $\phi(\beta) = \frac{\gamma}{2}\left\|\beta\right\|_2^2 \; \Rightarrow \; \phi^*(\beta^*) = \frac{1}{2\gamma}\|\beta^*\|_2^2$
    \item $\omega(\beta) = I\left\{\beta \leq \mathbf{1}\right\} \; \Rightarrow \; \omega^*(\beta^*) = \mathbf{1}^\top \beta^* + I(\beta^* \in \mathbb{R}_+^d).$
\end{enumerate}
Plugging these definitions into Theorem \ref{thm:primal_dual_} and removing constants yields the dual problem.
\begin{equation}
    \underset{\alpha \in \mathbb{R}_+^n}{\text{maximize }} -\frac{1}{2}\left\|\tau \mathbf{1} - \alpha\right\|_2^2 - \underset{v \in \mathbb{R}_-^d, u \in \mathbb{R}_+^d}{\text{min }}\frac{1}{2\gamma}\left\|X^\top \alpha - \lambda \mathbf{1} - v - u \right\|_2^2 + \mathbf{1}^\top u
\end{equation}
The effect of the $u,v$ variables is to clamp the entries of $X^\top \alpha - \lambda \mathbf{1}$ between $0$ and $1$. Each of the terms is differentiable at any $\alpha \in \mathbb{R}^n_+$ and the objective's gradient at $\alpha$ is given by
\[-\tau \mathbf{1} + \alpha + \frac{1}{\gamma}X\left[X^\top \alpha - \lambda \mathbf{1}\right]_{0}^1\]
where $[x]_a^b$ operates entrywise and sets the $i^{\text{th}}$ entry to $\text{min}(\text{max}(x_i, a), b)$.

In order to optimize this objective, note that the only non-differentiable aspect comes from the non-negativity constraint and is completely separable. This is a particularly simple non-differentiable problem which is amenable to active set algorithms where we compute a descent direction using the scheme in equation (\ref{eq:quasi_newton_}) on all $\alpha_i$ that are not clamped to $0$ as well as all $\alpha_i$ that are currently at $0$ but which would be set to a strictly positive value by the gradient. We use for $H = I + X_{\mathcal{A}}X^\top_{\mathcal{A}}$ where $\mathcal{A}$ contains all indices of $X^\top \alpha - \lambda \mathbf{1}$ that lie in $(0,1)$.
\subsubsection{Elastic-Net Regularized Logistic Regression}
The objective for Elastic-Net regularized logistic regression is
\begin{equation}
    -\sum_{i=1}^n \left[y_i x_i^\top \beta - \log\left(1 + e^{x_i^\top \beta}\right)\right] + \lambda \|\beta\|_1 + \frac{\tau}{2}\|\beta\|_2^2.
\end{equation}
The dual problem via Theorem \ref{thm:primal_dual_} and our table of conjugates is easily seen to be
\begin{equation*}
\begin{aligned}
    &\underset{\substack{\alpha \in \mathbb{R}^n \\ s \in [-1,1]^d}}{\text{max }}& -\sum_{i=1}^n\left[(\alpha_i + y_i)\log (\alpha_i + y_i) - (1 - \alpha_i - y_i)\log (1 - \alpha_i - y_i)\right] \\
    &&-\frac{1}{2\tau}\|X^\top \alpha - \lambda s\|_2^2\\
    &\text{subject to }& \mathbf{0} \preceq \alpha + y \preceq \mathbf{1}
\end{aligned}
\end{equation*}
with primal dual variables related by
\[\beta(\alpha) = \frac{1}{\tau}S(X^\top \alpha, \lambda).\]
Here $S(x, \lambda)$ is the soft-thresholding operator that sets each entry to
\[\text{sign}(x_i)\text{max}(|x_i| - \lambda,0).\]

The gradient at an interior point of the constraint set is given by
\[2 + \log \left[(\alpha+y)(1 - \alpha - y)\right] + \frac{1}{\tau}X\beta(\alpha),\]
where the $\log$ and vector products are elementwise. We use for the approximate Hessian in equation (\ref{eq:quasi_newton_})
\[\frac{1}{\alpha + y} - \frac{1}{1 - \alpha - y} + \frac{1}{\tau}X_{\mathcal{A}}X_{\mathcal{A}}^\top\]
where division is performed elementwise. The subscripts ${\mathcal{A}}$ indicate the columns of $X$ whose corresponding coefficients are not clamped to zero. Also, we handle the bound constraints on $\alpha$ by only computing a quasi-Newton step on the indices of $\alpha$ which are in the interior of the constraint set or which are at the boundary and whose gradient pushes them towards the interior. If, in the course of selecting a step size, a component of $\alpha$ runs into a boundary we fix that component but still allow the others to vary.
\subsubsection{Group Lasso Regularized Reduced Rank Regression}
We consider the objective 
\begin{equation}\label{eq:matrix_primal_}
    \frac{1}{2}\|Y - XW\|_F^2 + \rho \|XW\|_* + \lambda \|W\|_{2,1} + \frac{\tau}{2}\|W\|_F^2.
\end{equation}
In order to derive the dual of this problem, we compute the Fenchel conjugate of
\[f(P) = \frac{1}{2}\|Y - P\|_F^2 + \rho \|P\|_*.\]
Working with $\underset{P}{\text{min }} f(P) - \braket{A, P}$ instead, we have the minimization problem
\begin{equation}
    \underset{P}{\text{min }} \frac{1}{2}\|Y - P\|_F^2 + \rho \|P\|_* - \braket{A,P}.
\end{equation}
The nuclear norm may be rewritten as $\|P\|_* = \underset{V: \|V\|_2 \leq 1}{\text{max }} \braket{P, V}$, which leads to
\begin{equation}
    \underset{P}{\text{min }} \frac{1}{2}\|Y - P\|_F^2 - \braket{A,P} + \rho \underset{V: \|V\|_2 \leq 1}{\text{max }} \braket{P, V}.
\end{equation}
An application of Theorem \ref{thm:primal_dual_} shows that the conjugate is thus
\begin{equation}
    f^*(A) = \underset{V : \|V\|_2 \leq 1}{\text{min }}\;\frac{1}{2}\|Y + A - \rho V\|_F^2
\end{equation}
We are now ready to apply Theorem \ref{thm:primal_dual_} to equation (\ref{eq:matrix_primal_}). The overall dual problem is then
\begin{equation}\label{eq:matrix_dual_}
    -\underset{V : \|V\|_2 \leq 1}{\text{min }}\;\frac{1}{2}\|Y - A - \rho V\|_F^2 - \underset{S : \|S\|_{2,\infty} \leq 1}{\text{min }} \;\frac{1}{2\tau}\|X^\top A - \lambda S\|_F^2,
\end{equation}
where $\|M\|_{2,\infty}$ denotes the largest $\ell_2$ norm among the rows of $M$.

The primal variable is determined by
\[W(A) = \frac{1}{\tau}\left(X^\top A - \lambda S\right).\]
Observe that $\lambda S = DX^\top A$ where $D$ is diagonal with
\[D_{kk} = \begin{cases} 1 & \text{if } \|A^\top x_k\|_2 \leq \lambda \\
\frac{\lambda}{\|A^\top x_k\|_2} & \text{otherwise.}
\end{cases}.\]
Here $x_k$ denotes column $k$ of $X$.
This simplifies the primal variable calculation to
\begin{equation}
    W(A) = \frac{1}{\tau}\left(I - D\right)X^\top A.
\end{equation}
\subsubsection{Matrix Ranks}
This form of the primal variable also shows that the rank of $W(A)$ and $A$ is determined by the rank of $P$. For simplicity's sake we only discuss the cases when the matrix $X(I-D)$ has full rank when its zero columns are ignored. In particular, observe that the prediction on the training set is
\[P=XW(A) = \frac{1}{\tau}X(I-D)X^\top A.\]
If matrix $X(I-D)X^\top$ has full rank then $\text{rank}[P] = \text{rank}[W(A)] = \text{rank}[A]$. This first case can only occur when there are at least as many predictors in the model as there are training examples.

For the case when there are fewer predictors than training examples we let $\hat{X}$ be the submatrix of $X$ with nonzero columns in $X(I-D)$ and correspondingly define $\hat{D}$. We now assume that $\hat{X}$ has full rank, and we let $W^*,P^*,A^*$ be an optimal triplet pertaining to the primal, prediction, and dual variables. Suppose further that $A^* = BU + B_{\perp}U_{\perp}$ where $B$ is a basis for the range of ${X}(I-{D}){X}^\top$ and $B_{\perp}$ is a basis for its null space. Observe that
\[P^* = \frac{1}{\tau}X(I-D)X^\top A=\frac{1}{\tau}\hat{X}(I-\hat{D})\hat{X}^\top A=\frac{1}{\tau}\hat{X}(I-\hat{D})\hat{X}^\top BU.\]
For the primal coefficients, since $\hat{X}$ has full rank its columns are linearly independent. This implies that
\[\hat{X}(I-\hat{D})\hat{X}^\top u = \mathbf{0} \Leftrightarrow (I-\hat{D})\hat{X}^\top u = \mathbf{0}\]
since $I-\hat{D}$ has strictly positive entries on its main diagonal. It follows that 
\[(I-\hat{D})\hat{X}^\top B_\perp = \mathbf{0},\]
and hence
\[W^* = \frac{1}{\tau}(I-D)X^\top BU.\]
Since $P^*, W^*$ are primal optimal, we may equally well use $BU$ for our dual variable, and this must have the same rank as $P^*$.

\subsubsection{Gradient and Approximate Hessian}
In order to compute the gradient of
\[\underset{V : \|V\|_2 \leq 1}{\text{min }}\;\frac{1}{2}\|Y - A - \rho V\|_F^2\]
with respect to $A$, let $Y-A = U\Sigma Q^\top$ be the economical SVD of $Y-A$. The optimal $V$ is then given by
\[\rho V = U [\Sigma]^\rho Q^\top\]
where $[\Sigma]^\rho$ simply clips the singular values greater than $\rho$ to be $\rho$. The gradient of the dual loss is then given by
\[-\left(Y - A - U [\Sigma]^\rho Q^\top\right) = -U\left(\Sigma - [\Sigma]^\rho\right)Q^\top = -\sum_{k : \Sigma_{kk} > \rho} (\Sigma_{kk} - \rho) U_k Q_k^\top\]
where $U_k, Q_k$ are the $k^{\text{th}}$ columns of $U,Q$, respectively. We thus see that this gradient is low rank when $\rho$ is sufficiently large. 

The overall objective gradient is simply
\[-\sum_{k : \Sigma_{kk} > \rho} (\Sigma_{kk} - \rho) U_k Q_k^\top + \frac{1}{\tau}XW(A).\]
We use a quasi-Newton scheme as in equation (\ref{eq:quasi_newton_}), where our approximate Hessian is given by
\[H = I + \frac{1}{\tau}X\left(I-D\right)X^\top\]
with $D$ calculated from $W(A)$.

\bibliographystyle{unsrt}
\bibliography{mybib}

\begin{thebibliography}{10}

\bibitem{ESL}
Trevor Hastie, Robert Tibshirani, and Jerome Friedman.
\newblock {\em The Elements of Statistical Learning: Data Mining, Inference,
  and Prediction}.
\newblock Springer Science \& Business Media, 2009.

\bibitem{bertsimas2017optimal}
Dimitris Bertsimas and Jack Dunn.
\newblock Optimal classification trees.
\newblock {\em Machine Learning}, 2017.

\bibitem{breiman2001random}
Leo Breiman.
\newblock Random forests.
\newblock {\em Machine learning}, 2001.

\bibitem{kernels}
Bernard Sch{\"o}lkopf, Alexander~J. Smola, and Francis Bach.
\newblock {\em Learning with Kernels: Support Vector Machines, Regularization,
  Optimization, and Beyond}.
\newblock MIT Press, 2002.

\bibitem{market_basket}
Mohammed~Javeed Zaki.
\newblock Scalable algorithms for association mining.
\newblock {\em IEEE Transactions on Knowledge and Data Engineering}, 2000.

\bibitem{faers}
FDA.
\newblock Fda adverse event reporting system, 2019.

\bibitem{moleculenet}
Zhenqin Wu, Bharath Ramsundar, Evan~N Feinberg, Joseph Gomes, Caleb Geniesse,
  Aneesh~S Pappu, Karl Leswing, and Vijay Pande.
\newblock Moleculenet: a benchmark for molecular machine learning.
\newblock {\em Chemical Science}, 2018.

\bibitem{friedman1991multivariate}
Jerome~H Friedman.
\newblock Multivariate adaptive regression splines.
\newblock {\em The Annals of Statistics}.

\bibitem{tibshirani2019pliable}
Robert Tibshirani and Jerome Friedman.
\newblock A pliable lasso.
\newblock {\em Journal of Computational and Graphical Statistics}, 2019.

\bibitem{bien2013lasso}
Jacob Bien, Jonathan Taylor, and Robert Tibshirani.
\newblock A lasso for hierarchical interactions.
\newblock {\em Annals of Statistics}, 2013.

\bibitem{yu2019reluctant}
Guo Yu, Jacob Bien, and Ryan Tibshirani.
\newblock Reluctant interaction modeling, 2019.

\bibitem{hazimeh2019learning}
Hussein Hazimeh and Rahul Mazumder.
\newblock Learning hierarchical interactions at scale: A convex optimization
  approach, 2019.

\bibitem{thanei2018xyz}
Gian-Andrea Thanei, Nicolai Meinshausen, and Rajen~D Shah.
\newblock The xyz algorithm for fast interaction search in high-dimensional
  data.
\newblock {\em The Journal of Machine Learning Research}, 2018.

\bibitem{bengio2006convex}
Yoshua Bengio, Nicolas~L. Roux, Pascal Vincent, Olivier Delalleau, and Patrice
  Marcotte.
\newblock Convex neural networks.
\newblock In {\em Advances in Neural Information Processing Systems}, 2006.

\bibitem{boyd2017saturating}
Nicholas Boyd, Trevor Hastie, Stephen Boyd, Benjamin Recht, and Michael~I.
  Jordan.
\newblock Saturating splines and feature selection.
\newblock {\em The Journal of Machine Learning Research}, 2017.

\bibitem{rakesh}
Rakesh Achanta.
\newblock {\em Boosting Like Path Algorithms for $\ell_1$ Regularized Infinite
  Dimensional Convex Neural Networks}.
\newblock PhD thesis, Stanford University, 2019.

\bibitem{convex}
R~Tyrrell Rockafellar.
\newblock {\em Convex Analysis}.
\newblock Princeton University Press, 1970.

\bibitem{tibshirani1996regression}
Robert Tibshirani.
\newblock Regression shrinkage and selection via the lasso.
\newblock {\em Journal of the Royal Statistical Society: Series B
  (Methodological)}, 1996.

\bibitem{sparse_learning}
Trevor Hastie, Robert Tibshirani, and Martin Wainwright.
\newblock {\em Statistical Learning with Sparsity: The Lasso and
  Generalizations}.
\newblock CRC Press, 2015.

\bibitem{first_order}
A.~Beck.
\newblock {\em First-Order Methods in Optimization}.
\newblock Society for Industrial and Applied Mathematics, 2017.

\bibitem{safe_rules}
Laurent~El Ghaoui, Vivian Viallon, and Tarek Rabbani.
\newblock Safe feature elimination for the lasso and sparse supervised learning
  problems.
\newblock {\em arXiv preprint}, 2010.

\bibitem{mmds}
Jure Leskovec, Anand Rajaraman, and Jeffrey~D. Ullman.
\newblock {\em Mining of Massive Datasets}.
\newblock Cambridge University Press, 2014.

\bibitem{eclat}
M.J. Zaki, S.~Parthasarathy, M.~Ogihara, and W.~Li.
\newblock New algorithms for fast discovery of association rules.
\newblock {\em Intl. Conf. on Knowledge Discovery and Data Mining}, 1997.

\bibitem{scholkopf2000support}
Bernhard Sch{\"o}lkopf, Robert~C. Williamson, Alex~J. Smola, John Shawe-Taylor,
  and John~C. Platt.
\newblock Support vector method for novelty detection.
\newblock 2000.

\bibitem{rxnorm}
Stuart~J Nelson, Kelly Zeng, John Kilbourne, Tammy Powell, and Robin Moore.
\newblock Normalized names for clinical frugs: Rxnorm at 6 years.
\newblock {\em Journal of the American Medical Informatics Association}, 2011.

\bibitem{dumouchel1999bayesian}
William DuMouchel.
\newblock Bayesian data mining in large frequency tables, with an application
  to the fda spontaneous reporting system.
\newblock {\em The American Statistician}, 1999.

\bibitem{fda_tech}
FDA.
\newblock Data mining at fda -- white paper.
\newblock Technical report, FDA, 2018.

\bibitem{xgboost}
Tianqi Chen and Carlos Guestrin.
\newblock Xgboost: A scalable tree boosting system.
\newblock 2016.

\bibitem{nocedal}
Jorge Nocedal and Stephen Wright.
\newblock {\em Numerical Optimization}.
\newblock Springer Science \& Business Media, 2006.

\end{thebibliography}

\end{document}